\let\cite\citep
\newcommand{\stretchp}{{\parfillskip=0pt \emergencystretch=.5\textwidth \par}}
\newcommand{\x}{\mathbf{x}}
\newcommand{\y}{\mathbf{y}}
\newcommand{\z}{\mathbf{z}}
\newcommand{\n}{\mathbf{n}}
\newcommand{\f}{\mathbf{f}}
\newcommand{\w}{\mathbf{w}}
\renewcommand{\v}{\mathbf{v}}
\newcommand{\g}{\mathbf{g}}
\newcommand{\R}{\mathbb{R}}
\newcommand{\N}{\mathcal{N}}
\newcommand{\E}{\mathbb{E}}
\renewcommand{\S}{\mathcal{S}}
\newcommand{\proj}{\text{proj}}
\newcommand{\KLD}{D_{\rm KL}}
\newcommand\numberthis{\addtocounter{equation}{1}\tag{\theequation}}
\renewcommand{\vec}[1]{\ensuremath{\mathbf{#1}}}
\providecommand{\norm}[1]{\ensuremath{\left\lVert#1\right\rVert}}
\def\[{\begin{equation}}
\def\]{\end{equation}}
\DeclareMathOperator*{\argmin}{arg\,min}
\DeclareMathOperator*{\logavgexp}{logavgexp}
\theoremstyle{definition}
\newtheorem{definition}{Definition}[section]
\theoremstyle{theorem}
\newtheorem{theorem}{Theorem}[section]
\theoremstyle{lemma}
\newtheorem{lemma}{Lemma}[section]
\theoremstyle{notation}
\theoremstyle{remark}
\def\equationautorefname~#1\null{Eq.~(#1)\null}
\title{AmbientFlow: Invertible generative models from incomplete, noisy measurements}
\author{%
  \name Varun A. Kelkar \email varun.kelkar@analog.com \\
  % \thanks{Use footnote for providing further information about author (webpage, alternative address)---\emph{not} for acknowledging funding agencies.} \\
  % Department of Electrical and Computer Engineering\\
  \addr University of Illinois at Urbana-Champaign, Urbana, IL 61801 \\
  Currently at Analog Devices, Inc., Boston, MA 02110
  % examples of more authors
  \AND
  \name Rucha M. Deshpande \email r.deshpande@wustl.edu \\
  \addr Washington University in St. Louis, St. Louis, MO 63130
  \AND
  \name Arindam Banerjee \email arindamb@illinois.edu \\
  % Department of Computer Science \\
  \addr University of Illinois at Urbana-Champaign, Urbana, IL 61801
  \AND
  \name Mark A. Anastasio \email maa@illinois.edu \\
  % Department of Bioengineering \\
  % Department of Electrical and Computer Engineering \\
  % Department of Computer Science \\
  \addr University of Illinois at Urbana-Champaign, Urbana, IL 61801
  % \And
  % Coauthor \\
  % Affiliation \\
  % Address \\
  % \texttt{email} \\
  % \And
  % Coauthor \\
  % Affiliation \\
  % Address \\
  % \texttt{email} \\
}
\begin{document}

\maketitle

\begin{abstract}
Generative models have gained popularity for their potential applications in imaging science, such as image reconstruction, posterior sampling and data sharing. Flow-based generative models are particularly attractive due to their ability to tractably provide exact density estimates along with fast, inexpensive and diverse samples. Training such models, however, requires a large, high quality dataset of objects. In  applications such as computed imaging, it is often difficult to acquire such data due to requirements such as long acquisition time or high radiation dose, while acquiring noisy or partially observed measurements of these objects is more feasible. In this work, we propose AmbientFlow, a framework for learning flow-based generative models directly from noisy and incomplete data. Using variational Bayesian methods, a novel framework for establishing flow-based generative models from noisy, incomplete data is proposed. Extensive numerical studies demonstrate the effectiveness of AmbientFlow in learning the object distribution. The utility of AmbientFlow in a downstream inference task of image reconstruction is demonstrated.
\end{abstract}

\section{Introduction}\label{sec:intro}
Generative models have become a prominent focus of machine learning research in recent years. Modern generative models are neural network-based models of unknown data distributions learned from a large number of samples drawn from the distribution. They ideally provide an accurate representation of the distribution of interest and enable efficient, high-quality sampling and inference. Most modern generative models are implicit, i.e. they map a sample from a simple, tractable distribution such as the standard normal, to a sample from the distribution of interest. Popular generative models for image data distributions include variational autoencoders (VAEs), generative adversarial networks (GANs), normalizing flows and diffusion probabilistic models, among others \cite{vae, gan, glow, ddpm}. Recently, many of these models have been successful in synthesizing high-quality perceptually realistic images from the underlying distribution.

Generative models have also been investigated for applications in imaging science. For example, computed imaging systems such as computed tomography (CT) or magnetic resonance imaging (MRI) rely on computational reconstruction to obtain an estimate of an object from noisy or incomplete imaging measurements. Generative models have been investigated for their use as priors in image reconstruction in order to mitigate the effects of data-incompleteness or noise in the measurements. While GANs have been explored for this purpose \cite{csgm, pulse, picgm}, they suffer from shortcomings such as inadequate mode coverage \cite{mode_collapse}, insufficient representation capacity \cite{stylegan2, csgm} and misrepresentation of important domain-specific statistics \cite{gan_eval1}. On the other hand, invertible generative models (IGMs), or normalizing flows offer exact density estimates, tractable log-likelihoods and useful representations of individual images \cite{realnvp, nice, glow, clinn}, making them more reliable for downstream inference tasks in imaging science \cite{asim, iocs}. These models have shown potential for use in tasks such as image reconstruction, posterior sampling, uncertainty quantification and anomaly detection \cite{clinn, asim, ivan_inn, iocs, anomaly}.

Although IGMs are attractive for imaging applications, training them requires a large dataset of objects or high-quality image estimates as a proxy for the objects. Building such a dataset is challenging since acquiring a complete set of measurements to uniquely specify each object can be infeasible. Therefore, it is of interest to learn an IGM of objects directly from a dataset of noisy, incomplete imaging measurements.
This problem was previously addressed in the context of GANs via the AmbientGAN framework, which augments a conventional GAN with the measurement operator \cite{ambientgan, ambientgan_weimin}. It consists of generator and discriminator networks that are jointly optimized via an adversarial training strategy. Here, the generator attempts to synthesize synthetic objects that produce realistic measurements. The real and synthetic object distributions are then compared \textit{indirectly} via a discriminator that distinguishes between real measurements and simulated measurements of the synthetic objects. 
% Thus, the AmbientGAN indirectly compares the training data samples with the generated samples using a discriminator. 
This is fundamentally different from the training procedure of IGMs, which \textit{directly} computes and maximizes the log-probability of the training data samples. Therefore, ideas from the AmbientGAN framework cannot be easily adapted to train an IGM of objects when only incomplete measurements is available. % This raises a need for a new framework.

The key contributions of this work are as follows. First a new framework named AmbientFlow is developed for training IGMs using noisy, incomplete measurements. Second, the accuracy of the object distribution recovered via AmbientFlow is theoretically analyzed under prescribed ideal conditions using compressed sensing. Next, numerical studies are presented to demonstrate the effectiveness of the proposed method on several different datasets and measurement operators. Finally, the utility of AmbientFlow for a downstream Bayesian inference task is illustrated using a case study of image reconstruction from simulated stylized MRI measurements.

The remainder of this manuscript is organized as follows. \autoref{sec:bkd} describes the background of invertible generative models, computed imaging systems and image reconstruction from incomplete measurements. \autoref{sec:approach} describes the notation used and the proposed approach. \autoref{sec:num_studies} describes the setup for the numerical studies, with the results being presented in \autoref{sec:results}. Finally, the discussion and conclusion is presented in \autoref{sec:discussion}.

\section{Background}\label{sec:bkd}
\paragraph{Invertible generative models.}\label{sec:igm}
Invertible generative models (IGMs) or flow-based generative models, are a class of generative models that employ an invertible neural network (INN) to learn an implicit mapping from a simple distribution such as an independent and identically distributed (iid) Gaussian distribution to the data distribution of interest. The INN is a bijective mapping $G_\theta : \R^n \rightarrow \R^n$ constructed using a composition of $L$ simpler bijective functions $g_i : \R^n \rightarrow \R^n$, with
\begin{align}
\mathbf{h}^{(i)} = g_i(\mathbf{h}^{(i-1)}) = (g_i \circ g_{i-1} \circ \dots \circ g_1)(\z), ~~ 0< i \leq L,~~ \z, \mathbf{h}^{(i)} \in \R^n,
\end{align}
and $\x = G_\theta(\z) = \mathbf{h}^{(L)}$.
As a consequence of bijectivity, the probability distribution function (PDF) $p_\theta$ of $\x$ represented by the IGM can be related to the PDF 
$q_\z$ of $\z$ as:
\begin{align}\label{eqn:changeofvars}
p_\theta (\x) \cdot |\det \nabla_\z G_\theta(\z) \,|\, &= q_\z(\z).
\end{align}
In order the establish the IGM, a dataset $\mathcal{D} = \{\x^{(i)}\}_{i=1}^D$ is used, where $\x^{(i)}$ are assumed to be independent draws from the unknown true distribution $q_\x$. The INN is then trained by minimizing the following negative log-likelihood objective over the training dataset $\mathcal{D}$:
\begin{align}\label{eqn:flow_loss}
\mathcal{L}(\theta) = -\sum_{i=1}^D \log p_\theta (\x^{(i)}) 
= \sum_{i=1}^D \left[\log q_\z(\z^{(i)}) - \log |\!\det \nabla_\z G_\theta(\z^{(i)}) |\right], \quad \z^{(i)} = G_\theta^{-1}(\x^{(i)}).
% &= \log q_\z(\z) - \sum_{i=0}^{L-1}\log |\det \nabla_{\mathbf{h}^{(i)}} g_{i+1}(\mathbf{h}^{(i)}) |, \quad \mathbf{h}^{(0)} = \z.
\end{align}
Minimizing the above loss function is equivalent to minimizing the Kullback–Leibler (KL) divergence $D_{\rm KL}(q_\x \| p_\theta) \coloneqq \E_{\x \sim q_\x}[ \log q_\x(\x) - \log p_\theta(\x) ] $ between $p_\theta$ and the true data distribution $q_\x$. Most invertible generative models utilize a specialized architecture in order to guarantee invertibility of the constituent functions $g_i$ and tractable computation of the Jacobian loss term $\log |\det \nabla_\z G_\theta(\z) |$. Popular building blocks of such an architecture are affine coupling layers \cite{realnvp}, random permutations \cite{nice} and invertible $1\times 1$ convolutions \cite{glow}.

\paragraph{Image reconstruction from incomplete imaging measurements.}\label{sec:recon}
Many computed imaging systems can be modeled using an imaging equation described by the following linear system of equations:
\begin{align}\label{eqn:imaging_eqn}
\g = H\tilde{\f} + \mathbf{n}, 
\end{align}
where $\tilde{\f} \in \mathbb{R}^n$ is approximates the object to-be-imaged, $H \in \R^{m\times n}$, known as the forward model, is a linear operator that models the physics of the imaging process, $\mathbf{n} \in \R^m, \n \sim q_\n$ models the measurement noise, and $\g \in \R^m$ are the measurements of the object. Often, $H$ is ill-conditioned or rank deficient, in which case the measurements $\g$ are not sufficient to form a unique and stable estimate $\hat{\f}$ of the object $\tilde{\f}$, and prior knowledge about the nature of $\tilde{\f}$ is needed. Traditionally, one way to incorporate this prior information is to constrain the domain of $H$. For example, compressed sensing stipulates that if the true object is $k$-sparse after a full-rank linear transformation $\Phi \in \R^{l\times n}, ~ l \geq n$, then the object can be stably estimated if for all vectors $\v \in \R^n$ that are $k$-sparse in the transform domain $\Phi$, $H$ satisfies the restricted isometry property (RIP) \cite{candes}:

\begin{definition}[Restricted isometry property]\label{def:rip}
For $s \in \mathbb{N}$, define the restricted isometry constant (RIC) $\delta_s$ as the smallest constant that satisfies 
\begin{align}\label{eqn:ric}
(1 - \delta_s) \|\v \|_2^2  \leq \| H\v \|_2^2 \leq (1 + \delta_s)\| \v \|_2^2,
\end{align}
for all $\v$ such that $\norm{\Phi\v}_0 \leq s$. 
$H$ is said to satisfy the restricted isometry property for all $\v$ such that $\norm{\Phi\v}_0 \leq k$, if $\delta_k + \delta_{2k} + \delta_{3k} < 1$ \cite{candes}.
\end{definition}

\section{Approach}\label{sec:approach}
In this section, an AmbientFlow method is proposed for obtaining an IGM of objects from a dataset of measurements. The following preliminary notation will be used in the remainder of this paper.

\paragraph{Notation.}
Let $q_\f$, $q_\g$ and $q_\n$ denote the unknown true object distribution to-be-recovered, the true measurement distribution and the known measurement noise distribution, respectively. Let $\mathcal{D} = \{\g^{(i)} \}_{i=1}^D$ be a dataset of independent and identically distributed (iid) measurements drawn from $q_\g$. Let $G_\theta : \R^n \rightarrow \R^n$ be an INN. Let $p_\theta$ be the distribution represented by $G_\theta$, i.e. given a latent distribution $q_\z = \mathcal{N}(\vec{0}, I_n)$, $G_\theta(\z) \sim p_\theta$ for $\z \sim q_\z$. 
Also, let $\psi_\theta$ be the distribution of synthetic measurements, i.e. for $\f \sim p_\theta$, $H\f + \n \sim \psi_\theta$. Let $p_\theta(\f \,|\, \g) \propto q_\n(\g - H\f)\, p_\theta(\f)$ denote the posterior induced by the learned object distribution represented by $G_\theta$. Let $\Phi \in \R^{l\times n}, ~ l\geq n$ be a full-rank linear transformation (henceforth referred to as a sparsifying transform). Also, let $\S_k = \{ \v \in \R^n ~\text{s.t. } \|\Phi\v\|_0 \leq k \}$ be the set of vectors $k$-sparse with respect to $\Phi$. Since $\Phi$ is full-rank, throughout this work we assume without the loss of generality, that $\norm{\Phi^+}_2 \leq 1$, where $\Phi^+$ is the Moore-Penrose pseudoinverse of $\Phi$. Throughout the manuscript, we also assume that $q_\f$ is absolutely continuous with respect to $p_\theta$, and $q_\g$ is absolutely continuous with respect to $\psi_\theta$.\\

Conventionally, according to the discussion below \autoref{eqn:flow_loss}, $\KLD(q_\f \| p_\theta)$ would have to be minimized in order to train the IGM to estimate $q_\f$. However, in the present scenario, only samples from $q_\g$ are available. Therefore, we attempt to minimize the divergence $\KLD(q_\g \| \psi_\theta)$, and show that for certain scenarios of interest, this is formally equivalent to approximately minimizing a distance between $q_\f$ and $p_\theta$. However, computing $\KLD(q_\g \| \psi_\theta)$ is non-trivial because a direct representation of $\psi_\theta(\g)$ that could enable the computation of $\log \psi_\theta (\g)$ is not available. Fortunately, a direct representation of $p_\theta$ is available via $G_\theta$, which can be used to compute $\log p_\theta(\f)$ for a given $\f \in \R^n$, using \autoref{eqn:changeofvars}. Therefore, an additional INN, known as the posterior network  $h_\phi(\cdot ~; \g) : \R^n \rightarrow \R^n$ is introduced that represents the model posterior $p_\phi(\f \,|\, \g)$ designed to approximate $p_\theta(\f \,|\, \g)$ when jointly trained along with $G_\theta$. The posterior network $h_\phi$ is designed to take two inputs -- a new latent vector $\bm{\upzeta} \sim q_{\bm{\upzeta}} = \mathcal{N}(\vec{0}, I_n)$, and an auxiliary conditioning input $\g \sim q_\g$ from the training dataset, to produce $h_\phi(\bm{\upzeta} \,;\, \g) \sim p_\phi(\f \,|\, \g)$. The following theorem establishes a loss function that minimizes $\KLD(q_\g \| \psi_\theta)$ using the posterior network, circumventing the need for direct access to $\psi_\theta(\g)$, or samples of true objects from $q_\f$.

\begin{theorem}\label{thm:amflow_loss}
Let $h_\phi$ be such that $p_\phi(\f\,|\,\g) > 0$ over $\R^n$. Minimizing $\KLD(q_\g \| \psi_\theta)$ is equivalent to maximizing the objective function $\mathcal{L}(\theta, \phi)$ over $\theta, \phi$, where $\mathcal{L}(\theta, \phi)$ is defined as

\begin{align}\label{eqn:amflow_loss}
% \mathcal{L}_0(\theta, \phi) = \E_{\g \sim q_\g} \left[  \log  \E_{\bm{\upzeta} \sim q_{\bm{\upzeta}}}\left\{  \frac{ p_\theta \big(h_\phi(\bm{\upzeta}; \g)\big) ~ q_\n \big( \g - Hh_\phi(\bm{\upzeta}; \g) \big) }{p_\phi\big( h_\phi(\bm{\upzeta}; \g) \,|\, \g \big)}  \right\}  \right]
\mathcal{L}(\theta, \phi) 
% &= -\E_{\g \sim q_\g} \left[  \log  \E_{\f \sim p_\phi(\cdot \,|\, \g)}\left\{  \frac{ p_\theta(\f) ~ q_\n ( \g - H\f) }{p_\phi( \f \,|\, \g)}  \right\}  \right]\\
&= \E_{\g \sim q_\g} \left[  \log  \E_{\bm{\upzeta} \sim q_{\bm{\upzeta}}}\left\{  \frac{ p_\theta \big(h_\phi(\bm{\upzeta}; \g)\big) ~ q_\n \big( \g - Hh_\phi(\bm{\upzeta}; \g) \big) }{p_\phi\big( h_\phi(\bm{\upzeta}; \g) \,|\, \g \big)}  \right\}  \right]
\end{align}
\end{theorem}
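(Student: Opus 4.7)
The plan is to prove the stronger exact identity
\[
\mathcal{L}(\theta, \phi) \;=\; \E_{\g \sim q_\g}\!\left[\log \psi_\theta(\g)\right]
\]
for every admissible posterior network $h_\phi$, from which the claimed equivalence with minimizing $\KLD(q_\g \| \psi_\theta)$ follows immediately: only the term $-\E_{\g \sim q_\g}[\log \psi_\theta(\g)]$ in the KL depends on $(\theta, \phi)$, so maximizing $\mathcal{L}$ is the same as minimizing the KL up to a $(\theta, \phi)$-independent entropy constant.

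First I would unpack the marginal density of the synthetic measurements. Since $\g = H\f + \n$ with $\f \sim p_\theta$ and $\n \sim q_\n$ independently,
\[
\psi_\theta(\g) \;=\; \int_{\R^n} p_\theta(\f)\, q_\n(\g - H\f)\, d\f.
\]
This integral is intractable in closed form, but the hypothesis $p_\phi(\f \mid \g) > 0$ on $\R^n$ lets me introduce the posterior network as an importance distribution,
\[
\psi_\theta(\g) \;=\; \E_{\f \sim p_\phi(\cdot \mid \g)}\!\left[ \frac{p_\theta(\f)\, q_\n(\g - H\f)}{p_\phi(\f \mid \g)} \right].
\]

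Next I would convert this into an expectation over the latent $\bm{\upzeta} \sim q_{\bm{\upzeta}}$. By construction $h_\phi(\cdot\,;\g)$ is a bijection whose pushforward of $q_{\bm{\upzeta}}$ is exactly $p_\phi(\cdot \mid \g)$, so the substitution $\f = h_\phi(\bm{\upzeta};\g)$ yields
\[
\psi_\theta(\g) \;=\; \E_{\bm{\upzeta} \sim q_{\bm{\upzeta}}}\!\left[ \frac{p_\theta\big(h_\phi(\bm{\upzeta};\g)\big)\, q_\n\big(\g - H h_\phi(\bm{\upzeta};\g)\big)}{p_\phi\big(h_\phi(\bm{\upzeta};\g) \mid \g\big)} \right].
\]
Taking the logarithm of both sides and averaging over $\g \sim q_\g$ reproduces \autoref{eqn:amflow_loss} verbatim, completing the argument.

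I do not expect a serious technical obstacle here; the proof is essentially a change-of-measure followed by a change-of-variables on the latent. The one subtlety worth flagging is conceptual: because the logarithm sits \emph{outside} the inner expectation, no Jensen-style inequality is needed and the identity holds as equality for every admissible $\phi$. Consequently the population-level value of $\mathcal{L}$ does not in fact depend on $\phi$; the role of $\phi$ only manifests when the inner expectation is replaced by a Monte-Carlo estimate, where it controls the estimator's variance — presumably the reason $\phi$ must be co-optimized with $\theta$ in practice. The positivity hypothesis $p_\phi(\cdot \mid \g) > 0$ on $\R^n$ is precisely what makes the importance-sampling step legitimate and guards against division by zero, so it should be invoked explicitly in the write-up.
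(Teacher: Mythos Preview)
Your proposal is correct and follows essentially the same approach as the paper's own proof: express $\psi_\theta(\g)$ as the marginal $\int p_\theta(\f)\,q_\n(\g-H\f)\,d\f$, introduce $p_\phi(\cdot\mid\g)$ as an importance distribution (using the positivity hypothesis), change variables to the latent $\bm{\upzeta}$ via $h_\phi$, and conclude that $\mathcal{L}(\theta,\phi)=\E_{\g\sim q_\g}\log\psi_\theta(\g)$ differs from $-\KLD(q_\g\|\psi_\theta)$ only by the constant $\E_{\g\sim q_\g}\log q_\g(\g)$. Your additional remark that the population-level $\mathcal{L}$ is independent of $\phi$ (because the logarithm sits outside the inner expectation) is a correct and useful observation that the paper does not make explicit in its proof.
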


The proof of \autoref{thm:amflow_loss} is provided in the
% The derivation of the loss function in \autoref{eqn:amflow_loss} is provided in the 
appendix. A variational lower bound of $\mathcal{L}$ is employed, which promotes consistency between the modeled posterior $p_\phi(\cdot \,|\, \g)$ and the posterior induced by the learned object distribution, $p_\theta(\cdot \,|\, \g)$:
\begin{align}\label{eqn:amflow}
\mathcal{L}_M(\theta, \phi) = \E_{\g,\bm{\upzeta}_i}  \logavgexp_{0<i\leq M} \left[ \log p_\theta \big(h_\phi(\bm{\upzeta}_i; \g)\big) + \log q_\n \big( \g - Hh_\phi(\bm{\upzeta}_i; \g) \big) - \log p_\phi\big( h_\phi(\bm{\upzeta}_i; \g) \,|\, \g \big)  \right],
\end{align}
where $\bm{\upzeta}_i \sim q_{\bm{\upzeta}}, ~ 0 <i\leq M$, and
$
\logavgexp_{0<i\leq M}(x_i) \coloneqq \log \left[\frac{1}{M}\sum_{i=1}^M \exp(x_i) \right].
$

Intuitively, the three terms inside $\logavgexp$ in \autoref{eqn:amflow} can be interpreted as follows. The first term implies that $G_\theta$ is trained on samples produced by the posterior network $h_\phi$. The second term is a data-fidelity term that makes sure that $h_\phi$ produces objects consistent with the measurement model. The third term penalizes degenerate $h_\phi$ for which $\nabla_{\bm{\upzeta}}h_\phi(\bm{\upzeta}; \g)$ is ill-conditioned, for example when $h_\phi$ produces no variation due to $\bm{\upzeta}$ and only depends on $\g$. Note that the first and third terms are directly accessible via the INNs $G_\theta$ and $h_\phi$. Also, for noise models commonly used in modeling computed imaging systems, such as the Gaussian noise model \cite{b&m}, $q_\n$ can be explicitly computed.

For sufficiently expressive parametrizations for $p_\theta$ and $h_\phi$, the maximum possible value of $\mathcal{L}_M$ is $\E_{\g\sim q_\g}\log q_\g(\g)$, which corresponds to the scenario where the learned posteriors are consistent, i.e. $p_\phi(\f \,|\, \g) = p_\theta(\f \,|\, \g)$, and the learned distribution of measurements matches the true measurement distribution, i.e. $\psi_\theta = q_\g$. It can be shown that for a class of forward operators, matching the measurement distribution is equivalent to matching the object distribution:
\begin{lemma}\label{lem:full_rank}
If $H$ is a square matrix ($n=m$) with full-rank, if the noise $\n$ is independent of the object, and if the characteristic function of the noise $\chi_\n(\bm{\upnu}) = \E_{\n \sim q_\n}\exp(\iota\,\bm{\upnu}^\top\n)$ has full support over $\mathbb{R}^m$ ($\iota$ is the square-root of $-1$), then $\psi_\theta = q_\g \Rightarrow p_\theta = q_\f.$
\end{lemma}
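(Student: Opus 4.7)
}
The plan is to pass to characteristic functions and use the noise assumption to ``deconvolve.'' Since $\g = H\f + \n$ with $\n$ independent of $\f$ under both distributions (true and modeled), the characteristic function of $\g$ factors as a product: for $\bm{\upnu} \in \R^m$,
\begin{equation*}
\chi_{q_\g}(\bm{\upnu}) = \E_{\f \sim q_\f}\big[\exp(\iota\, \bm{\upnu}^\top H\f)\big]\, \chi_\n(\bm{\upnu}) = \chi_{q_\f}(H^\top \bm{\upnu})\, \chi_\n(\bm{\upnu}),
\end{equation*}
and similarly $\chi_{\psi_\theta}(\bm{\upnu}) = \chi_{p_\theta}(H^\top \bm{\upnu})\, \chi_\n(\bm{\upnu})$.

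From the hypothesis $\psi_\theta = q_\g$, the two characteristic functions are equal on $\R^m$. Because $\chi_\n$ has full support over $\R^m$ (i.e.\ is nowhere zero), I can divide pointwise to obtain $\chi_{p_\theta}(H^\top \bm{\upnu}) = \chi_{q_\f}(H^\top \bm{\upnu})$ for every $\bm{\upnu} \in \R^m$. Since $H$ is a square full-rank matrix, $H^\top$ is a bijection from $\R^m$ to $\R^n$, and so the previous identity yields $\chi_{p_\theta}(\bm{\upmu}) = \chi_{q_\f}(\bm{\upmu})$ for all $\bm{\upmu} \in \R^n$. The uniqueness theorem for characteristic functions then gives $p_\theta = q_\f$, which is the desired conclusion.

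The only delicate step is the division, which is where the ``full support'' hypothesis on $\chi_\n$ is essential: any zero set of $\chi_\n$ would be a frequency band where the measurement model destroys information about $\f$ and therefore an obstruction to identifiability. The square full-rank assumption on $H$ is what ensures $H^\top$ is surjective onto $\R^n$, so that equality of the two characteristic functions after deconvolution transfers from a range to the whole Fourier domain; without it one would only recover a marginalized statement about $p_\theta$ and $q_\f$ on the row space of $H$.
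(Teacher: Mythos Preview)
Your proof is correct and follows essentially the same approach as the paper: factor the characteristic function of $\g$ using independence of $\n$, divide by the nowhere-vanishing $\chi_\n$, and then use bijectivity of $H$ to conclude. The only cosmetic difference is that the paper introduces the intermediate variable $\y = H\f$ and argues in two steps (first $q_\g$ determines $q_\y$, then $q_\y$ determines $q_\f$), whereas you collapse these by writing $\chi_{q_\g}(\bm{\upnu}) = \chi_{q_\f}(H^\top\bm{\upnu})\,\chi_\n(\bm{\upnu})$ directly.
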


The proof of \autoref{lem:full_rank} is provided in \autoref{app:amflow_theory}. 
\color{black}
\autoref{lem:full_rank} can be extended to a certain class random, rank-deficient forward operators that nevertheless provide an invertible push-forward operator \cite{ambientgan}. However, in computed imaging, forward models are often deterministic with a fixed null-space, where it is typically not possible to design the hardware to ensure the invertibility of the push-forward operator \cite{cs_medical, csmri}. 
\color{black}
In such a setting, it is not possible to uniquely relate the learned object distribution $p_\theta$ to the learned measurement distribution $\psi_\theta$ without additional information about $q_\f$. Nevertheless, if the objects of interest are known to be compressible with respect to a sparsifying transform $\Phi$, $p_\theta$ can be constrained to the set of distributions concentrated on these compressible objects. In order to recover a distribution $p_\theta$ concentrated on objects that are compressible with respect to $\Phi$, the following optimization problem is proposed:
\begin{align}\label{eqn:spamflow}
\hat{\theta}, \hat{\phi} = \argmin_{\theta, \phi} -\mathcal{L}_M(\theta, \phi) \quad \text{subject to } ~ \E_{\g\sim q_\g}\E_{\f\sim p_\phi(\cdot \,|\, \g)} \| \Phi\f - \Phi\text{proj}_{\mathcal{S}_k}(\f) \|_1 < \epsilon,
\end{align}
where $\text{proj}_{\mathcal{S}_k}(\f)$ denotes the orthogonal projection of $\f \in \R^n$ onto the set  $\mathcal{S}_k$ of objects for which $\Phi\f$ is $k-$sparse.
It can be shown that if $H$ and $\f \sim q_\f$ satisfy the conditions of compressed sensing and the AmbientFlow is trained sufficiently well using \autoref{eqn:spamflow}, then the error between the true and recovered object distributions can be bounded. This is formalized as follows.

\begin{theorem}\label{thm:amflow_main}
For a PDF $q: \R^n \rightarrow \R$, let $q^{\S_k}$ denote the distribution of ${\rm proj}_{S_k}(\x)$, for $\x \sim q$. Also, for distributions $q_1, q_2$, let $W_1(q_1 \| q_2) \coloneqq \inf_{q\in \Gamma}\E_{(\x_1, \x_2) \sim q}\| \x_1 - \x_2 \|_2$, denote the Wasserstein 1-distance, with $\Gamma$ being the set of all joint distributions $q: \R^{n\times n} \rightarrow \R$ with marginals $q_1, q_2$, i.e. $\int q(\x_1, \x_2)d\x_2 = q_1(\x_1),~ \int q(\x_1, \x_2)d\x_1 = q_2(\x_2)$.

If the following hold:
\begin{enumerate}[topsep=-1ex, itemsep=-1ex, leftmargin=0.5cm]
\item $W_1(q_\f ~\|~ q_\f^{\S_k}) \leq \epsilon'$ (the true object distribution is concentrated on $k$-sparse objects under $\Phi$), 
\item $H$ satisfies the RIP for objects $k$-sparse w.r.t. $\Phi$, with isometry constant $\delta_k$,
\item the characteristic function of noise $\chi_\n(\bm{\upnu})$ has full support over $\mathbb{C}^m$, and
\item $(\theta, \phi)$ satisfying $p_{\theta} = q_\f$ and $p_{\phi}(\cdot \,|\, \g) = p_{\theta}(\cdot \,|\, \g)$ is a feasible solution to \autoref{eqn:spamflow} ($G_\theta$ and $h_\phi$ have sufficient capacity),
\end{enumerate}
then the distribution $p_{\hat{\theta}}$ recovered via \autoref{eqn:spamflow} is close to the true object distribution, in terms of the Wasserstein distance i.e.
\begin{align}
W_1 (p_{\hat{\theta}} ~\|~ q_\f) \leq \left( 1 + \frac{1}{\sqrt{1-\delta_k}}\| H \|_2 \right)(\epsilon + \epsilon').
\end{align}
\end{theorem}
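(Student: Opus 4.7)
\textbf{Proof proposal for \autoref{thm:amflow_main}.} My plan is to combine the tightness of the variational bound (which pins down $\psi_{\hat\theta}$ and the posterior network), the sparsity constraint (which localizes $p_{\hat\theta}$ near $\S_k$), an explicit noise-deconvolution (which matches $(H)_*p_{\hat\theta}$ to $(H)_*q_\f$), and the RIP on differences of $k$-sparse signals (which lifts measurement-space distances back to object space).

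\textbf{Step 1: optimality forces $\psi_{\hat\theta}=q_\g$ and posterior consistency.} By assumption~4, the pair $(\theta^\star,\phi^\star)$ with $p_{\theta^\star}=q_\f$ and $p_{\phi^\star}(\cdot|\g)=p_{\theta^\star}(\cdot|\g)$ is feasible. Substituting into \autoref{eqn:amflow} and using Bayes' rule with $\psi_{\theta^\star}=q_\g$, the $\logavgexp$ argument reduces to the constant $q_\g(\g)$ in $\bm{\upzeta}$; Jensen is therefore tight and $\mathcal{L}_M(\theta^\star,\phi^\star)=\E_{\g\sim q_\g}\log q_\g(\g)$, the global maximum identified in \autoref{thm:amflow_loss}. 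Hence $(\hat\theta,\hat\phi)$ attains the same value, which forces both (i) $\KLD(q_\g\|\psi_{\hat\theta})=0$, i.e.\ $\psi_{\hat\theta}=q_\g$, and (ii) a zero Jensen gap, i.e.\ the ratio $p_{\hat\theta}(\f)q_\n(\g-H\f)/p_{\hat\phi}(\f|\g)$ is constant in $\f$; by Bayes that constant is $q_\g(\g)$, so $p_{\hat\phi}(\cdot|\g)=p_{\hat\theta}(\cdot|\g)$.

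\textbf{Step 2: localize $p_{\hat\theta}$ near $\S_k$ and match pushforwards.} Because $\psi_{\hat\theta}=q_\g$, drawing $\g\sim q_\g$ then $\f\sim p_{\hat\phi}(\cdot|\g)=p_{\hat\theta}(\cdot|\g)$ yields $\f\sim p_{\hat\theta}$. Using $\|\Phi^+\|_2\leq 1$ and $\|x\|_2\leq\|x\|_1$, the constraint in \autoref{eqn:spamflow} gives, via the natural coupling $(\f,\mathrm{proj}_{\S_k}(\f))$,
\[
W_1(p_{\hat\theta},p_{\hat\theta}^{\S_k})\;\leq\;\E_{\f\sim p_{\hat\theta}}\|\f-\mathrm{proj}_{\S_k}(\f)\|_2\;\leq\;\E\|\Phi\f-\Phi\,\mathrm{proj}_{\S_k}(\f)\|_1\;<\;\epsilon.
\]
Independently, $\psi_{\hat\theta}=q_\g$ rewritten as a convolution reads $(H)_*p_{\hat\theta}*q_\n=(H)_*q_\f*q_\n$; taking Fourier transforms and dividing by the nonvanishing $\chi_\n$ yields $(H)_*p_{\hat\theta}=(H)_*q_\f$.

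\textbf{Step 3: triangle in measurement space, then lift via RIP.} Pushforward by a linear map is $\|H\|_2$-Lipschitz in $W_1$, so using $(H)_*p_{\hat\theta}=(H)_*q_\f$ as the zero-cost middle step of the triangle inequality,
\[
W_1\bigl((H)_*p_{\hat\theta}^{\S_k},\,(H)_*q_\f^{\S_k}\bigr)\;\leq\;\|H\|_2(\epsilon+\epsilon').
\]
Differences of $k$-sparse vectors are $2k$-sparse, so the RIP makes $H$ injective on $\S_k$ and its inverse $(1/\sqrt{1-\delta_k})$-Lipschitz from $H(\S_k)$ onto $\S_k$ (identifying the paper's $\delta_k$ with the RIC governing differences). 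Lifting any coupling $\pi$ of $(H)_*p_{\hat\theta}^{\S_k}$ and $(H)_*q_\f^{\S_k}$ by $(H^{-1}\!\times\!H^{-1})_*\pi$ inflates the $W_1$ cost by at most $1/\sqrt{1-\delta_k}$, hence
\[
W_1(p_{\hat\theta}^{\S_k},q_\f^{\S_k})\;\leq\;\tfrac{\|H\|_2}{\sqrt{1-\delta_k}}(\epsilon+\epsilon').
\]
A final triangle inequality $W_1(p_{\hat\theta},q_\f)\leq W_1(p_{\hat\theta},p_{\hat\theta}^{\S_k})+W_1(p_{\hat\theta}^{\S_k},q_\f^{\S_k})+W_1(q_\f^{\S_k},q_\f)\leq\epsilon+\tfrac{\|H\|_2}{\sqrt{1-\delta_k}}(\epsilon+\epsilon')+\epsilon'$ collapses to the stated bound.

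The step I expect to be the main obstacle is Step~1: rigorously showing that a maximizer of the $\logavgexp$ \emph{lower bound} inherits both measurement matching \emph{and} posterior consistency. This needs the feasibility assumption to certify that the variational bound is tight at the optimum; otherwise $\mathcal{L}_M$ could sit strictly below $\mathcal{L}$ and one would only get approximate versions of $\psi_{\hat\theta}=q_\g$ and $p_{\hat\phi}(\cdot|\g)=p_{\hat\theta}(\cdot|\g)$, whose slack would then propagate additively through every subsequent triangle inequality.
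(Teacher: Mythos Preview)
Your proof is correct and uses the same ingredients as the paper (optimality forcing $\psi_{\hat\theta}=q_\g$ and posterior consistency, noise deconvolution to match $(H)_*p_{\hat\theta}$ with $(H)_*q_\f$, the sparsity constraint, and RIP), but the assembly is genuinely different. The paper fixes a generic coupling $\beta\in\Gamma(q_\f,p_{\hat\theta})$, writes a \emph{pointwise} triangle inequality $\|\f_1-\f_2\|_2\leq\|\f_1-\f_1^\S\|_2+\|\f_2-\f_2^\S\|_2+\|\f_1^\S-\f_2^\S\|_2$, applies RIP to convert the last summand to $\tfrac{1}{\sqrt{1-\delta_k}}\|H\f_1^\S-H\f_2^\S\|_2$, triangles again in measurement space, takes $\E_\beta$ and then $\inf_\beta$; to kill the residual term $\inf_\beta\E_\beta\|H\f_1-H\f_2\|_2$ it invokes $q_\y=\rho_{\hat\theta}$, which tacitly requires a gluing/disintegration construction to exhibit a coupling with $H\f_1=H\f_2$ a.s. You instead work entirely at the level of $W_1$: push the projected distributions forward by $H$, insert $(H)_*p_{\hat\theta}=(H)_*q_\f$ as a zero-cost waypoint, and lift back via the bi-Lipschitz inverse $H^{-1}\colon H(\S_k)\to\S_k$ furnished by RIP. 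Your coupling-lifting step transparently replaces the paper's implicit gluing, and the Lipschitz-pushforward bookkeeping replaces the pointwise triangle inequalities in $\R^m$; the paper's route, via its \autoref{lem:wasserstein_proj}, has the minor advantage of recording that the projection coupling is \emph{optimal}, whereas you only use it as an upper bound. Both arguments share the same looseness in identifying the governing RIC with $\delta_k$ rather than $\delta_{2k}$, which you correctly flag.
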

The proof of \autoref{thm:amflow_main} is deferred to \autoref{app:amflow_theory}.

\noindent\begin{minipage}{0.49\linewidth}
   In practice, \autoref{eqn:spamflow} is reformulated in its Lagrangian form, and a regularization parameter $\mu$ is used to control the strength of the sparsity-promoting constraint. Also, inspired by the $\beta$-VAE framework \cite{betavae}, an additional regularization parameter $\lambda$ was used to control the strength of the likelihood term $\log q_\n(\g - Hh_\phi(\bm{\upzeta}_i;\g))$. This modifies the problem \stretchp \vspace{-3pt}
\end{minipage}
\quad\begin{minipage}{0.49\linewidth}
\captionsetup{type=figure}
\centering
\includegraphics[width=\textwidth]{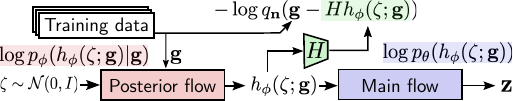}
\captionof{figure}{A schematic of the AmbientFlow framework}
\label{fig:schematic}
\end{minipage}

\noindent to maximizing the following objective function, which was optimized using gradient-based methods. 
\begin{multline}\label{eqn:amflow_loss_practice}
    \tilde{\mathcal{L}}_M(\theta, \phi) = \E_{\g\sim q_\g}\E_{\bm{\upzeta}_i\sim q_{\bm{\upzeta}}}\left[  \logavgexp_{0<i\leq M} \left\{ \log p_\theta \big(h_\phi(\bm{\upzeta}_i; \g)\big) + \lambda\log q_\n \big( \g - Hh_\phi(\bm{\upzeta}_i; \g) \big) - \log p_\phi\big( h_\phi(\bm{\upzeta}_i; \g) \,|\, \g \big)  \right\}\right.\\
    - \mu \big\| \Phi h_\phi(\bm{\upzeta}_i; \g) - \text{proj}_{\mathcal{S}_k}(\Phi h_\phi(\bm{\upzeta}_i; \g)) \big\|_1 \bigg]
\end{multline}

\color{black}
The proposed additive sparsifying penalty is computed by hard-thresholding the output of $\Phi h_\phi(\bm{\upzeta}_i; \g)$ to project it onto the space of $k$-sparse signals, and computing the $\ell_1$ norm of the residual, with $k$ and $\mu$ being treated as tunable hyperparameters. However, note that consistent with \autoref{eqn:amflow_loss_practice}, the loss terms for both the INNs correspond to the original (un-thresholded) outputs of the posterior. This ensures that invertibility is maintained, and the loss terms from both the INNs are well-defined. 
\color{black}
Empirically, we observe that the proposed $\ell_1$ penalty also promotes sparse deviation of the output of $h_\phi$ from $\S_k$, which improves the quality of the images generated by AmbientFlow.

\section{Numerical Studies}\label{sec:num_studies}
This section describes the numerical studies used to demonstrate the utility of AmbientFlow for learning object distributions from noisy and incomplete imaging measurements. The studies include toy problems in two dimensions, low-dimensional problems involving a distribution of handwritten digits from the MNIST dataset, problems involving face images from the CelebA-HQ dataset as well as the problem of recovering the object distribution from stylized magnetic resonance imaging measurements. A case study that demonstrates the utility of AmbientFlow in the downstream tasks of image reconstruction and posterior sampling is also described.
\color{black}
Additional numerical studies, including an evaluation of the posterior network, additional ablation studies, and a face image inpainting case study are included in \autoref{app:additional_exp}.
\color{black}

\noindent\begin{minipage}{0.49\linewidth}
\textbf{Datasets and imaging models.}\\
\textit{1) Toy problems:}
First, a two-dimensional object distribution $q_\f : \R^2 \rightarrow \R$ was considered, which was created as a sum of eight Gaussian distributions $\mathcal{N}(\vec{c}_i, \sigma_\f^2I_2), ~ 1\leq i\leq 8$, with centers $\vec{c}_i$ located at the vertices of a regular octagon centered at the origin, and standard deviation $\sigma_\f = 0.15$, as shown in \autoref{fig:toy}a. 
The forward operator was the identity operator, and the noise $\n$ was distributed as a zero-mean Gaussian with covariance $\sigma_\n^2I_2$, with $\sigma_\n = 0.45$. The distribution of the \stretchp \vspace{-3pt}
\end{minipage}
\quad
\begin{minipage}{0.48\linewidth}
\captionsetup{type=figure}
\centering
\includegraphics[width=\textwidth]{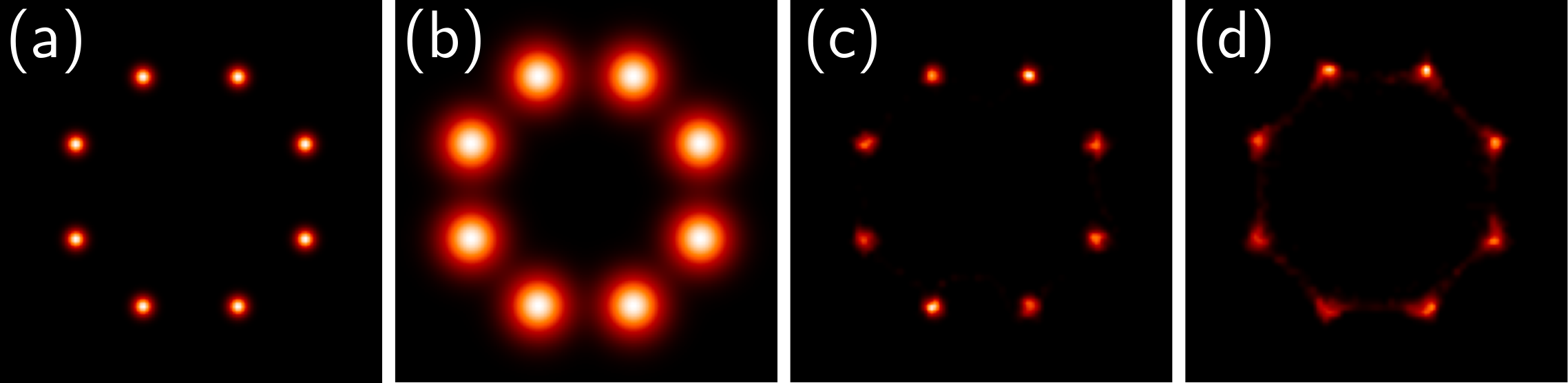}
\captionof{figure}{(a) True distribution $q_\f$, (b) distribution $q_\g$ of measurements, (c) distribution learned by a flow model trained on true objects, and (d) distribution learned by AmbientFlow trained on measurements.}
\label{fig:toy}
\end{minipage}

\noindent measurements $\g = \f + \n$ is shown in \autoref{fig:toy}b. A training dataset of size $D = 5\times 10^7$ was used. \\

Next, a problem of recovering the distribution of MNIST digits from noisy and/or blurred \stretchp
\vspace{-3pt}
\noindent\begin{minipage}{0.43\linewidth}
images of MNIST digits was considered \cite{mnist}. For this problem, three different forward models were considered, namely the identity operator, and two Gaussian blurring operators $H_{\rm blur1}$ and $H_{\rm blur2}$ with root-mean-squared (RMS) width values $\sigma_b = 1.5$ and $3.0$ pixels. The measurement noise was distributed as $\n \sim \N(\vec{0}, \sigma_\n^2I_m)$, with $\sigma_\n = 0.3$.\\

\textit{2) Face image study:} For the face image study, images of size $n = $ 64$\times$64$\times$3 from the CelebA-HQ dataset were considered \cite{celeba-hq}. Two forward models were considered, namely the identity operator and the
blurring operator with RMS width $\sigma_b = 1.5$, and the measurement noise was distributed as $\n \sim \N(\vec{0}, \sigma_\n^2I_m)$, $\sigma_\n = 0.2$. A discrete gradient operator was used as the sparsifying transform $\Phi$.\\
\end{minipage}
\quad
\begin{minipage}{0.54\linewidth}
\captionsetup{type=figure}
\centering
\includegraphics[width=\textwidth]{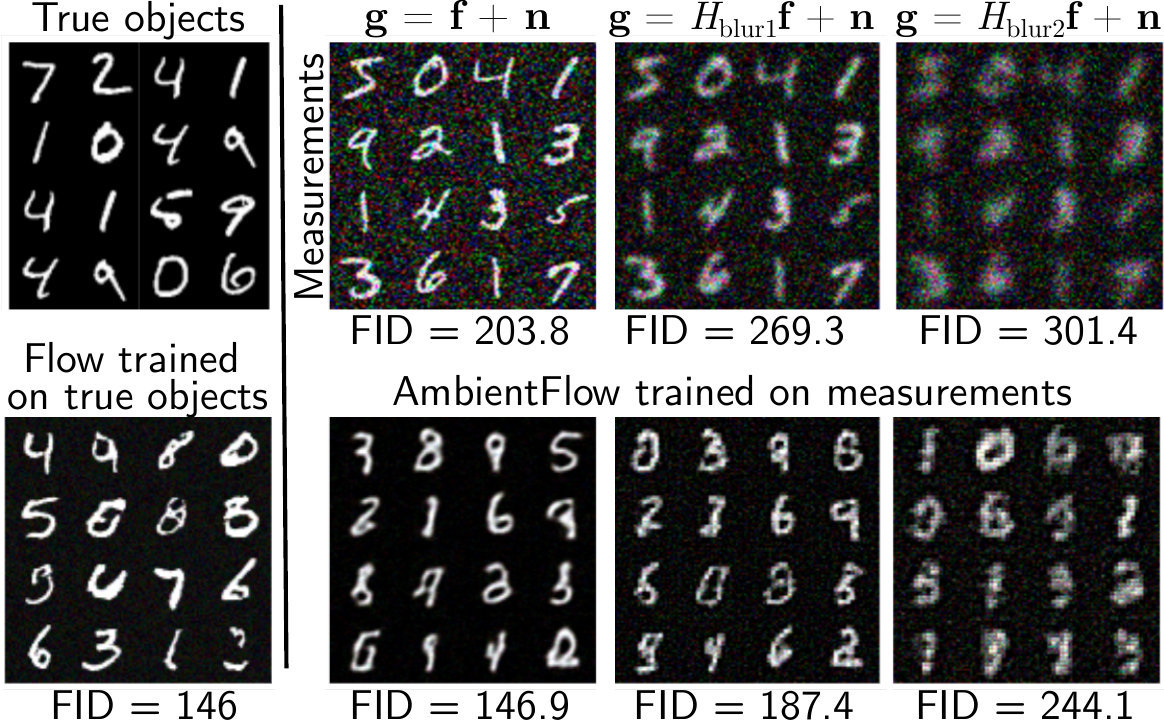}
\vspace{-10pt}
\captionof{figure}{Samples from the flow model trained on true objects and AmbientFlow trained on measurements shown alongside samples of true objects and measurements for the MNIST dataset.}
\label{fig:mnist}
\end{minipage}

\textit{3) Stylized MRI study:} In this study, the problem of recovering the distribution of objects from simulated, stylized MRI measurements was considered. T2-weighted brain images of size $n = $ 128$\times$128 from the FastMRI initiative database were considered \cite{fastmri} as samples from the object distribution. Stylized MRI measurements with undersampling ratio $n/m = 1$ (fully sampled) and $n/m = 4$ were simulated using the fast Fourier transform (FFT). Complex valued iid Gaussian measurement noise with standard deviation 0.1 times the total range of ground truth gray values was considered. A discrete gradient operator was used as the sparsifying transform $\Phi$.

\paragraph{Network architecture and training.}\label{sec:training}
The architecture of the main flow model $G_\theta$ was adapted from the Glow architecture \cite{glow}. The posterior network was adapted from the conditional INN architecture proposed by Ardizzone, \textit{et. al} \cite{cinn}. AmbientFlow was trained using PyTorch using an NVIDIA A100 GPU. All hyperparameters for the main INN were fixed based on a PyTorch implementation of the Glow architecture \cite{rosinality}, except the number of blocks, which was set to scale logarithmically  by the image dimension.

\paragraph{Baselines and evaluation metrics.}\label{sec:baselines}
For each dataset, an INN was trained on the ground-truth objects. The architecture and hyperparameters used for this INN for a particular dataset were identical to the ones used for main flow $G_\theta$ within the AmbientFlow framework trained on that dataset. Besides, for each forward model for the face image and stylized MRI study, a non-data-driven image restoration/reconstruction algorithm was used to generate a dataset of individual estimates of the object. For $H = I_m$ with $\n \sim \N(\vec{0}, \sigma_\n^2 I_m)$, the block-matching and 3D filtering (BM3D) denoising algorithm was used to perform image restoration \cite{bm3d}. For the blurring operator with Gaussian noise, Wiener deconvolution was used for this purpose \cite{wiener}. For the stylized MRI study, a penalized least-squares with TV regularization (PLS-TV) algorithm was used for %\stretchp \vspace{-4pt}
image reconstruction \cite{plstv}. The regularization parameter for the image reconstruction method was tuned to give the lowest RMS error (RMSE) for every individual reconstructed image. 
Although this method of tuning the parameters is not feasible in real systems, it gives the best set of reconstructed images in terms of the RMSE, thus providing a strong baseline.\\

The Frechet Inception distance (FID) score, computed using the Clean-FID package \cite{clean_fid}, was used to compare a dataset of 5,000 true objects with an equivalent number of images synthesized using (1) the INN trained on the true objects and (2) the AmbientFlow trained on the measurements, and (3) images individually reconstructed from their corresponding  measurements. Additionally, for the stylized MRI study, radiomic features meaningful to medical imaging were computed on the true objects, generated objects, and reconstructed images \cite{radiomics}.\\

\begin{figure}
\centering
\includegraphics[width=\textwidth]{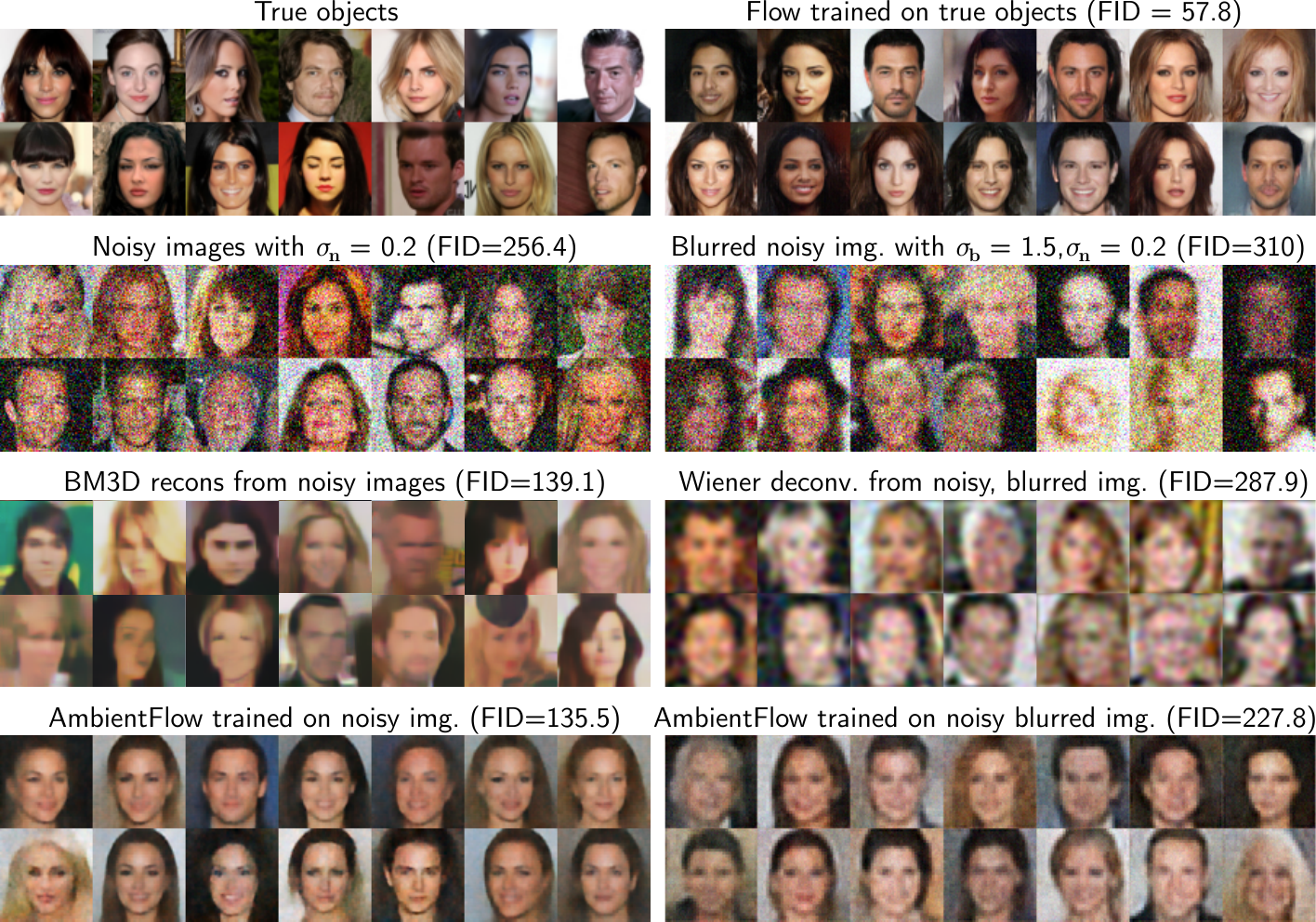}
\caption{True objects, noisy/blurred image measurements, reconstructed images and images synthesized by the flow model trained on the true objects, as well as the AmbientFlows trained on the measurements for the CelebA-HQ dataset.}
\label{fig:celeba_gen_panel}
\end{figure}

\paragraph{Case study.}\label{sec:case_studies}
Next, the utility of the AmbientFlow in a downstream Bayesian inference task was examined. For this purpose, a case study of image reconstruction from incomplete measurements was considered, where the AmbientFlow was used as a prior. 
\color{black} 
Importantly, we consider the scenario where the forward model used for simulating the measurements is different from the one associated with the AmbientFlow training.
\color{black}
Preliminaries of generative priors for image reconstruction are discussed in \cite{dimakis_chapter}. In this study, the following two image reconstruction tasks are considered -- (1) approximate maximum a posteriori (MAP) estimation, i.e. approximating the mode of the posterior $p_\theta(\cdot \,|\, \g)$, and (2) approximate sampling from the posterior $p_\theta(\cdot \,|\, \g)$.\\

For both the tasks, an AmbientFlow trained on the fully sampled, noisy simulated MRI measurements, as well as the flow trained on the true objects were considered. For the first task, the compressed sensing using generative models (CSGM) formalism was used to obtained approximate MAP estimates from measurements, for a held-out for a test dataset of size 45 \cite{asim}: 
\begin{align*}
    \hat{\f}_{\rm MAP} = G_\theta(\hat{\z}_{\rm MAP}), ~~ \text{with } \hat{\z}_{\rm MAP} = \argmin_{\z} \| \g - HG_\theta(\z) \|_2^2 + \lambda\| \z \|_2^2.\numberthis\\
\end{align*}

For the second task, approximate posterior sampling was performed with the flow models as priors using annealed Langevin dynamics (ALD) iterations proposed in Jalal, \textit{et al.} \cite{iocs}. For each true object, the minimum mean-squared error (MMSE) estimate $\hat{\f}_{\rm MMSE}$ and the pixelwise standard deviation map $\hat{\sigma}$ were computed empirically using 40 samples obtained via ALD iterations. These image estimates were compared with reconstructed images obtained using the PLS-TV method.  The regularization parameters for each image reconstruction method were tuned to achieve the best RMSE on a single validation image, and then kept constant for the entire test dataset.

\section{Results}\label{sec:results}

Figure \ref{fig:toy} shows the true object distribution, the distribution learned by a flow model trained on objects, the measurement distribution, and the object distribution recovered by AmbientFlow trained using the measurements. It can be seen that AmbientFlow is successful in generating nearly noiseless samples that belong to one of the eight Gaussian blobs, although a small number of generated samples lie in the connecting region between the blobs. \\

For the MNIST dataset, \autoref{fig:mnist} shows samples from the flow model trained on true objects and AmbientFlow trained on measurements alongside samples of true objects and measurements, for different measurement models. It can be seen that when the degradation process is a simple noise addition, the AmbientFlow produces samples that visually appear denoised. When the degradation process consists of blurring and noise addition, the AmbientFlow produces images that do not contain blur or noise, although the visual image quality degrades as the blur increases. 
The visual findings are corroborated with quantitative results in terms of the FID score, shown in \autoref{fig:mnist}.\\ 

\begin{figure}
\captionsetup{type=figure}
\centering
\includegraphics[width=\textwidth]{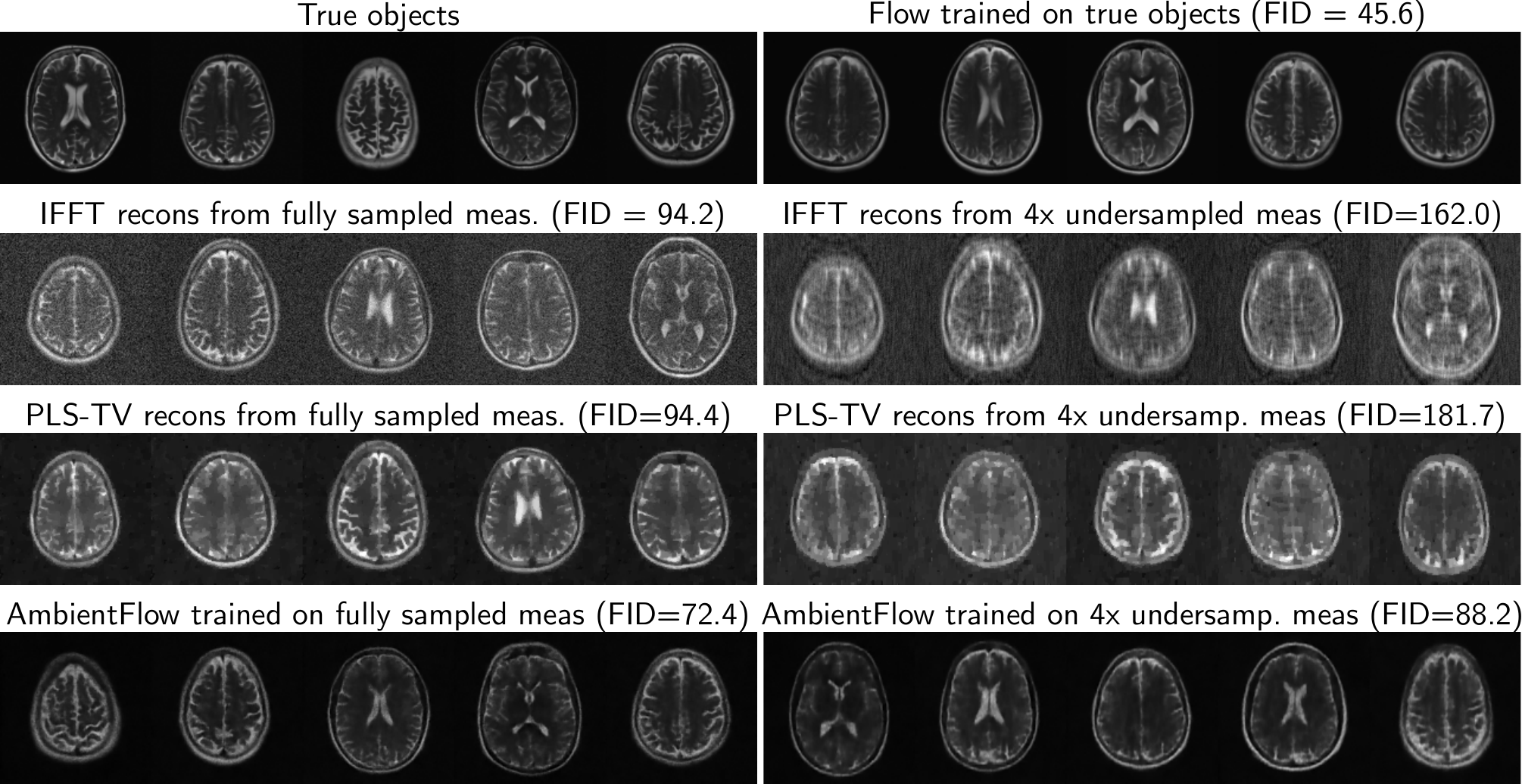}
\vspace{10pt}
\captionof{figure}{True objects, IFFT-based image estimates, PLS-TV based image estimates and images synthesized by the flow model trained on the true objects, as well as the AmbientFlows trained on the measurements for the stylized MRI study.}
\label{fig:mri_gen_panel}
\end{figure}

Figure \ref{fig:celeba_gen_panel} shows the results of the face image study for the two different measurement processes considered - (1) additive noise, and (2) Gaussian blur followed by additive noise. It can be seen that both visually and in terms of the FID score, the quality of images generated by the AmbientFlow models was second only to the flow trained directly on the true objects, for both the forward models considered. The images synthesized by the AmbientFlow models had better visual quality and better fidelity in distribution with the true objects \textcolor{black}{with respect to FID} than the ones produced by individually performing image restoration using BM3D and Wiener deconvolution for the two forward models respectively. 
\color{black}
This suggests that an AmbientFlow trained directly on the measurements would give a better approximation to the object distribution in terms of the considered metrics as compared to a regular flow model trained on the image datasets individually reconstructed via BM3D/Wiener deconvolution.
\color{black}

\begin{table}
\caption{The mean $\pm$ standard deviation of the RMSE and SSIM values computed over the following test image datasets -- (1) the images reconstructed using the PLS-TV method, (2) the MAP and MMSE estimates using the flow prior trained on true objects, and (3) the MAP and MMSE estimates using the AmbientFlow prior trained on fully sampled noisy measurements. }
\label{tab:rmse_ssim}
\vspace{5pt}
\centering
\begin{tabular}{@{}llll@{}}
\toprule
\multicolumn{2}{c}{Method}                                          & \multicolumn{1}{c}{RMSE}   & \multicolumn{1}{c}{SSIM}   \\ \midrule
\multicolumn{2}{c}{PLS-TV}                                          & 0.038 $\pm$ 0.007          & 0.806 $\pm$ 0.019          \\ \midrule
\multirow{2}{*}{Flow prior trained on true objects} & MAP Estimate  & 0.025 $\pm$ 0.005 & 0.922 $\pm$ 0.013          \\
                                                    & MMSE Estimate & \textbf{0.022 $\pm$ 0.003} & \textbf{0.940 $\pm$ 0.006} \\ \midrule
\multirow{2}{*}{AmbientFlow prior}                  & MAP Estimate  & 0.025 $\pm$ 0.004 & 0.925 $\pm$ 0.012 \\
                                                    & MMSE Estimate & \textbf{0.022 $\pm$ 0.004} & \textbf{0.936 $\pm$ 0.008}          \\ \bottomrule

\end{tabular}
\end{table}

The results of the stylized MRI study are shown in \autoref{fig:mri_gen_panel}. The visual and FID-based quality of images synthesized by the AmbientFlow models was inferior only to the images synthesized by the flow trained \stretchp
\vspace{-2pt}
\noindent\begin{minipage}{0.48\linewidth}
directly on objects, and was superior to the images reconstructed individually from the measurements
using the PLS-TV method. Since the underlying Inception network used to compute the FID score is not directly
related to medical images, additional evaluation was performed in terms of radiomic features relevant to medical image assessments. \\

Figure \ref{fig:radiomics} plots the empirical PDF over the first two principal components of the radiomic features extracted from each of the MR image sets shown in \autoref{fig:mri_gen_panel}, except the IFFT image estimates. It can be seen that there is a significant disparity between the principal radiomic feature PDFs of the true objects and the images reconstructed individually using PLS-TV. On the other hand, the AmbientFlow-generated images have a radiomic feature distribution closer to the true objects for both the fully sampled and 4-fold undersampled cases. This implies that, training an AmbientFlow on the noisy/incomplete measurements yielded an estimate of the object distribution that was more accurate in terms of relevant radiomic features, than the one defined by images individually reconstructed from the measurements using the PLS-TV method.
\end{minipage}
 ~~\,
 \begin{minipage}{0.49\linewidth}
\captionsetup{type=figure}
\centering
\includegraphics[width=\textwidth]{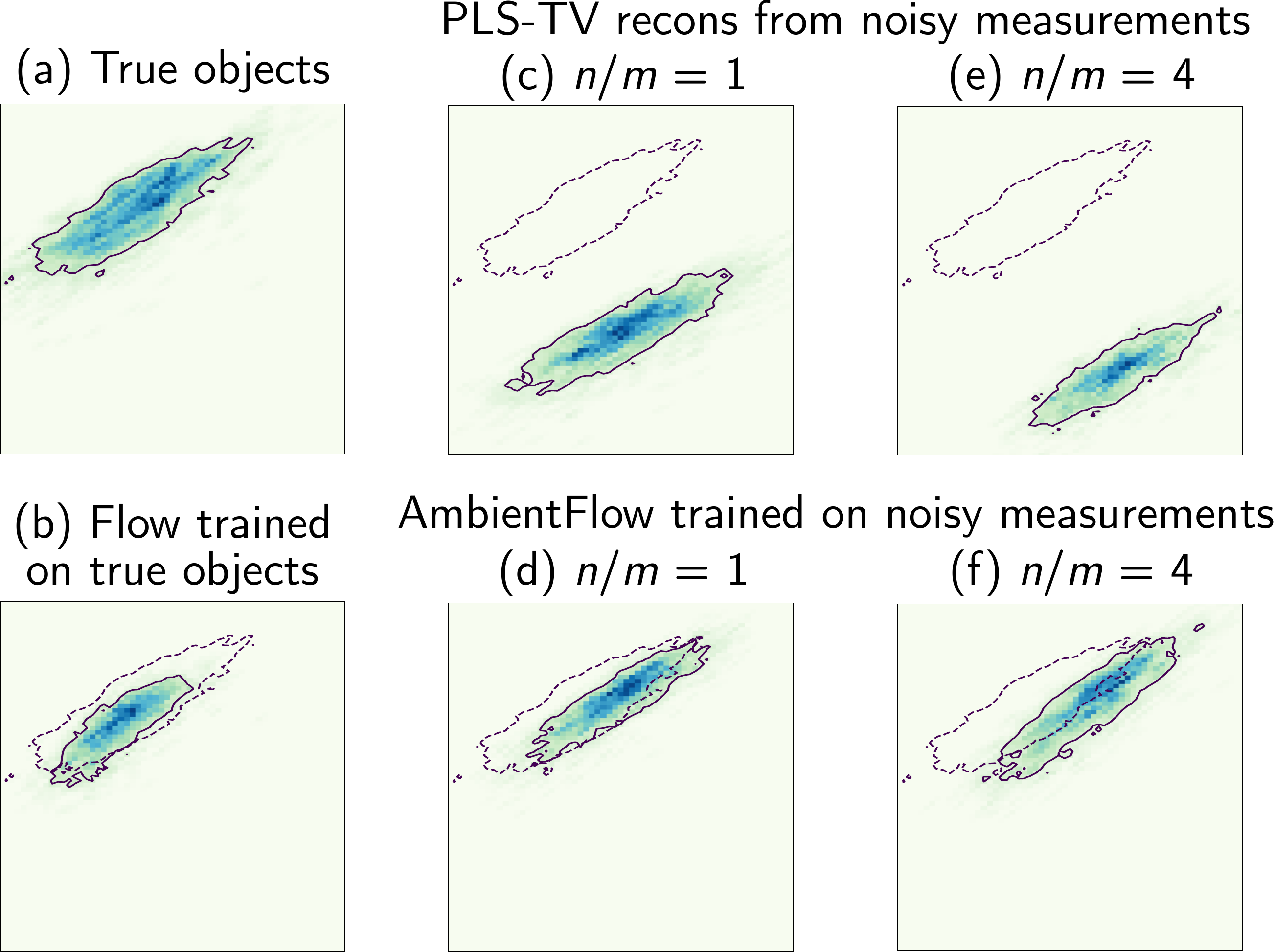}
\vspace{-4pt}
\captionof{figure}{Empirical PDF over the first two principal components of the radiomic features extracted from MRI images. 
% (a) true objects, (b) images from flow trained on true objects, (c) PLS-TV image estimates from fully sampled noisy measurements, (d) images from AmbientFlow trained on fully sampled noisy measurements, (e) PLS-TV image estimates from 4-fold undersampled noisy measurements, and (f) images from AmbientFlow trained on 4-fold undersampled noisy measurements. 
For each plot, the bold contour encloses the region containing 80\% of the probability mass. For (b-f), the dotted contour encloses the region containing 80\% of the probability mass of the true objects.}
\label{fig:radiomics}
\end{minipage}

\noindent\begin{minipage}{0.58\linewidth}
Next, the utility of the AmbientFlow for Bayesian inference is demonstrated with the help of an image reconstruction case study. Figure \ref{fig:recon_panel} shows a true object alongside images reconstructed from stylized 4-fold undersampled MR measurements simulated from the object, using the reconstruction methods described in \autoref{sec:case_studies}. Recall that for both the flow-based priors shown, the MAP estimate was obtained using the CSGM framework \cite{csgm}, and the MMSE estimate and the pixelwise standard deviation maps were computed empirically from samples from the posterior $p_\theta(\f \,|\, \g)$ obtained using annealed Langevin dynamics \cite{iocs}. Visually, it can be seen that the images reconstructed using the AmbientFlow prior were comparable to the images reconstructed using the flow prior trained on the true objects, and better than the image reconstructed using the PLS-TV method. \autoref{tab:rmse_ssim} shows the root-mean-squared error (RMSE) and structural similarity (SSIM) \cite{ssim} of the reconstructed images with respect to the true object, averaged over a dataset of 45 test images. It can be seen that in terms of RMSE and SSIM, both the MAP and the MMSE image estimates obtained using the AmbientFlow prior are comparable to those obtained using the flow prior trained on true objects, despite the AmbientFlow being trained only using noisy stylized MR measurements.  \vspace{3pt}
 \end{minipage}
\quad
\begin{minipage}{0.4\linewidth}
\captionsetup{type=figure}
\centering
\includegraphics[width=\textwidth]{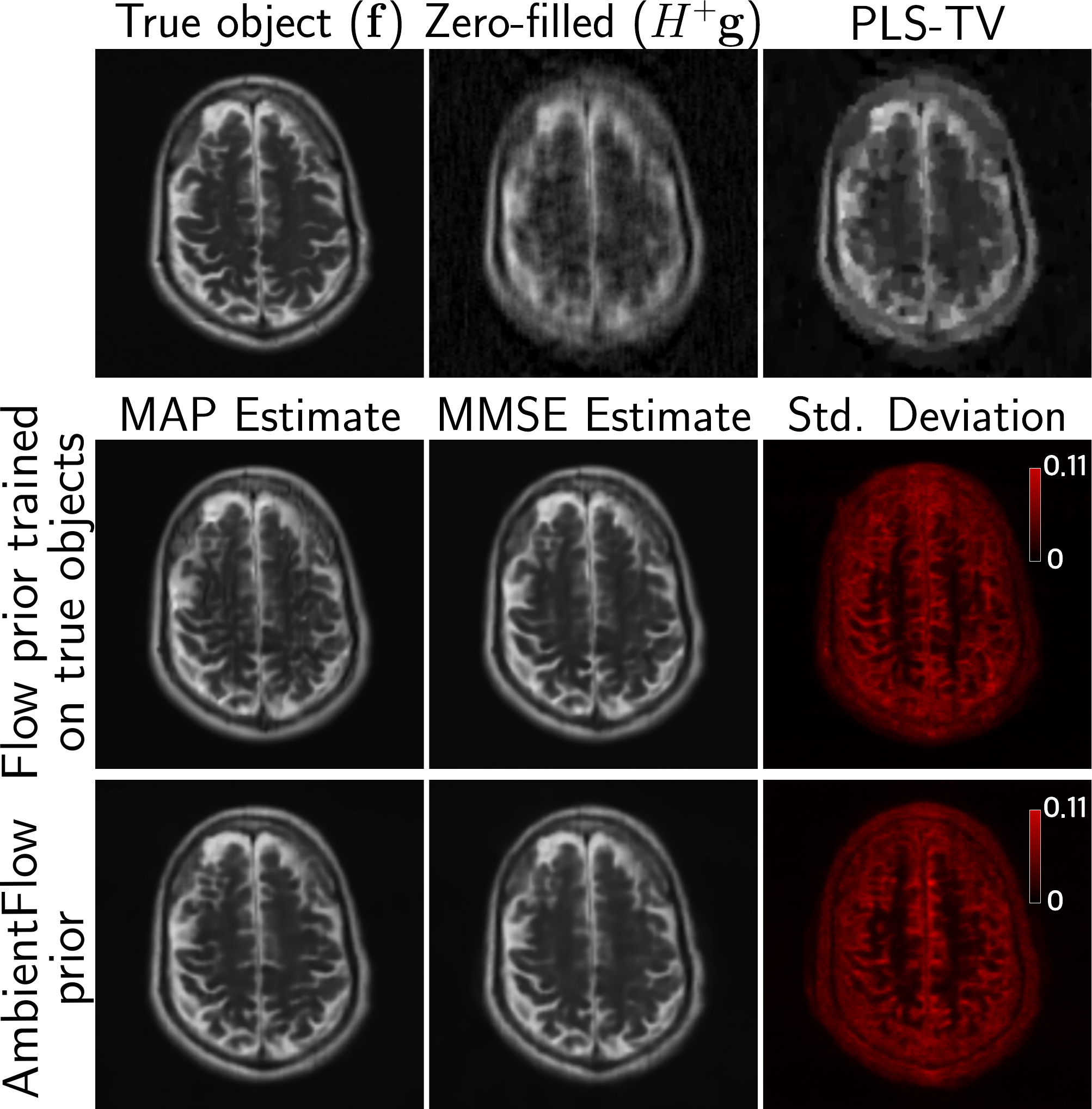}
\vspace{-4pt}
\captionof{figure}{Image estimates and pixelwise standard deviation maps from the image reconstruction case study.}
\label{fig:recon_panel}
\end{minipage}

\section{Discussion and conclusion}\label{sec:discussion}
\color{black}
An estimate of the distribution of objects is known to be important for applications in imaging science. This is because an unconditional generative model of objects can potentially be used in image reconstruction tasks without the need for paired data of images and measurements, and in a way that accommodates a wide variety of relevant forward models \cite{csgm, asim}. Additionally, unconditional generative models can be used to approximate ideal Bayesian classifiers \cite{mcmc_gan}, or for anomaly detection \cite{anomaly}, or image manipulation \cite{image_interp}.
\color{black}
However, obtaining such an estimate in terms of a generative model that is useful for downstream tasks has remained challenging, especially when only noisy and incomplete measurements of the objects are available. In this work, a framework for learning flow-based generative models of objects from noisy/incomplete measurements was developed. The presented numerical studies show that the proposed AmbientFlow framework was able to mitigate the effects of data incompleteness and noise present in the measurements in order to build an accurate estimate of the object distribution \textcolor{black}{in terms of the considered evaluation metrics}, and generate visually appealing images. In terms of perceptual measures such as the FID score as well as domain specific radiomic features, the images synthesized by AmbientFlow maintained higher distributional fidelity with the true objects than the images individually reconstructed from the measurements. Furthermore, when the AmbientFlow trained on noisy measurements was employed as a prior in an image reconstruction task, the image estimates obtained were as accurate \textcolor{black}{in terms of RMSE and SSIM} as those obtained when a flow model trained directly on the true objects was employed. 

Some of the above observations also apply to the AmbientGAN framework developed by Bora, \textit{et al.} \cite{ambientgan}. Although the current state-of-the-art GANs may lead to images with better perceptual quality than IGMs, GANs trained on medical images have been shown to misrepresent medically relevant statistics despite producing visually appealing images \cite{gan_eval1}. Also, an important drawback of GANs with in the context of imaging science is its unreliability in downstream Bayesian inference tasks. For example, GAN-constrained image reconstruction methods are known to be prone to hallucinations caused by dataset bias, distribution shifts and representation error, whereas IGMs used for the same purpose have been shown to be comparatively robust \cite{halu, asim, pulse, clinn, jalal_fairness, iocs, inn_pact}. These drawbacks of GANs are in part due to the insufficient representation capacity and inability to access the log-probability, both of which are applicable for GANs learned in the ambient setting as well. IGMs, on the other hand, due to their ability to accurately represent images and compute fast, exact density estimates, are more suitable for some downstream inference tasks such as image reconstruction, posterior sampling and uncertainty quantification as compared to GANs. 
\color{black}
Further similarities and differences between AmbientGAN and AmbientFlow are as follows. In some scenarios, AmbientGAN can be more flexible since it can easily accommodate arbitrary differentiable random forward models, whereas AmbientFlow is limited by the capabilities of the posterior network. However, in the presence of a fixed null space, some studies have shown that AmbientGAN performance can degrade and the images produced by it can display aliasing artifacts characteristic of the measurement operator \cite{ambientgan_weimin}. To the best of our knowledge, incorporating additional priors into the AmbientGAN loss function has not been rigorously studied. Another key difference between the two approaches is that unlike AmbientGAN, AmbientFlow also provides a posterior network which can be useful by itself for certain image reconstruction tasks, as shown in \autoref{app:additional_exp}.
\color{black}

The AmbientFlow framework bears some similarity with variational autoencoders (VAEs). In both cases, a variational lower bound of the evidence of the data is minimized. The object distribution in AmbientFlow is analogous to the latent variable distribution in VAEs. However, in VAEs, the latent distribution is typically simple and non-unique, and its desirable properties include disentanglement, tractable sampling, and accurate representation of the data via a trained decoder. However, in the AmbientFlow framework, the object distribution is a complex high dimensional distribution that is of primary interest and needs to be recovered as accurately as possible. It is related to the measured data via a physical measurement process, and may have known structure such as transform compressibility. Therefore, the aims and objectives of the two frameworks that guide their design are radically different. 

\color{black}
The posterior network in our work also has interesting connections to deep probabilistic imaging (DPI) \cite{dpi},
which also uses an invertible network to model the posterior, and trains it in a way that is constrained by a prior. However, unlike DPI, the posterior model in our work is an end-to-end conditional generative model that, when trained, can directly produce posterior samples by taking in the measurement vector as one of the inputs. 
\color{black}

The presented framework can be adopted to other generative models that utilize a log-likelihood-based training objective, such as denoising diffusion probabilistic models (DDPMs) which enable high-quality Bayesian inference in imaging \cite{score_based_inverse}. \color{black}
Recent examples of learning diffusion models from noisy/incomplete data include \cite{amdiffusion_noisy} and \cite{ambient_diffusion}. However, the approach by Aali \textit{et al.} applies only to measurements with white Gaussian noise and an identity forward operator. In contrast, the approach by Daras, \textit{et al.} can incorporate a wide class of forward operators, but does not account for measurement noise. In the future, an extension of the AmbientFlow framework to diffusion models could account for both, forward operators with a null space, as well as measurement noise.
\color{black}

A key limitation of the proposed framework is that its performance depends heavily on the design of the posterior network. The posterior network architecture currently employed is inspired by Ardizzone, \textit{et. al} \cite{cinn}. It displays favourable inductive biases for images due to several design choices such as wavelet-based downsampling layers, and a Laplacian pyramid feature extractor for the conditioning input. However, this architecture may not be able to properly model the posterior when $H$ depends on a random parameter, such as a random view angle relevant for cryo-electron microscopy \cite{cryoem}. 
\color{black}
In theory, it is straightforward to modify the loss function so that in addition to $\g$, a random forward model $H \sim q_H$ is also a conditioning input to the posterior network. However, in practice, designing a posterior network architecture that can successfully account for the random forward operator is nontrivial, and could be an interesting avenue for future work.
\color{black}
Also, although this work involves preliminary assessments of AmbientFlow using the FID score and radiomic features, a proper evaluation of generative models for imaging applications is still an open problem \cite{gan_eval_open}. Thorough evaluation of such models would involve assessing whether they can reproduce image statistics that are relevant to a wide variety of downstream tasks \cite{gan_eval1, gan_eval2}.

\section*{Acknowledgements}
This work was supported by the National Institute of Health (NIH) award EB034249. We acknowledge Carl Edwards for useful discussions.

\appendix

\section{Theoretical analysis for \autoref{sec:approach}}\label{app:amflow_theory}

\newcommand{\hexdec}{\quad\quad\quad\quad}
\newcommand{\cee}{\mathbf{c}}

First, the notation defined in \autoref{sec:approach} is rehashed here for convenience.
\paragraph{Notation.}  Let $q_\f$, $q_\g$ and $q_\n$ denote the unknown true object distribution to-be-recovered, the true measurement distribution and the known measurement noise distribution, respectively. Let $\mathcal{D} = \{\g^{(i)} \}_{i=1}^D$ be a dataset of independent and identically distributed (iid) measurements drawn from $q_\g$. Let $G_\theta : \R^n \rightarrow \R^n$ be an INN. Let $p_\theta$ be the distribution represented by $G_\theta$, i.e. given a latent distribution $q_\z = \mathcal{N}(\vec{0}, I_n)$, $G_\theta(\z) \sim p_\theta$ for $\z \sim q_\z$. 
Also, let $\psi_\theta$ be the distribution of fake measurements, i.e. for $\f \sim p_\theta$, $H\f + \n \sim \psi_\theta$. Let $p_\theta(\f \,|\, \g) \propto q_\n(\g - H\f)\, p_\theta(\f)$ denote the posterior induced by the learned object distribution represented by $G_\theta$. Let $\Phi \in \R^{l\times n}, ~ l\geq n$ be a full-rank linear transformation (henceforth referred to as a sparsifying transform). Also, let $\S_k = \{ \v \in \R^n ~\text{s.t. } \|\Phi\v\|_0 \leq k \}$ be the set of vectors $k$-sparse with respect to $\Phi$. Since $\Phi$ is full-rank, throughout this chapter we assume without the loss of generality, that $\norm{\Phi^+}_2 \leq 1$, where $\Phi^+$ is the Moore-Penrose pseudoinverse of $\Phi$. Throughout the manuscript, we also assume that $q_\f$ is absolutely continuous with respect to $p_\theta$, and $q_\g$ is absolutely continuous with respect to $\psi_\theta$.

\subsection{Proof of \autoref{thm:amflow_loss}}
\paragraph{\autoref{thm:amflow_loss}.}
\textit{
Let $h_\phi$ be such that $p_\phi(\f\,|\,\g) > 0$ over $\R^n$. Minimizing $\KLD(q_\g \| \psi_\theta)$ is equivalent to maximizing the objective function $\mathcal{L}(\theta, \phi)$ over $\theta, \phi$, where $\mathcal{L}(\theta, \phi)$ is defined as }

\begin{align}
% \mathcal{L}_0(\theta, \phi) = \E_{\g \sim q_\g} \left[  \log  \E_{\bm{\upzeta} \sim q_{\bm{\upzeta}}}\left\{  \frac{ p_\theta \big(h_\phi(\bm{\upzeta}; \g)\big) ~ q_\n \big( \g - Hh_\phi(\bm{\upzeta}; \g) \big) }{p_\phi\big( h_\phi(\bm{\upzeta}; \g) \,|\, \g \big)}  \right\}  \right]
\mathcal{L}(\theta, \phi) 
% &= -\E_{\g \sim q_\g} \left[  \log  \E_{\f \sim p_\phi(\cdot \,|\, \g)}\left\{  \frac{ p_\theta(\f) ~ q_\n ( \g - H\f) }{p_\phi( \f \,|\, \g)}  \right\}  \right]\\
&= \E_{\g \sim q_\g} \left[  \log  \E_{\bm{\upzeta} \sim q_{\bm{\upzeta}}}\left\{  \frac{ p_\theta \big(h_\phi(\bm{\upzeta}; \g)\big) ~ q_\n \big( \g - Hh_\phi(\bm{\upzeta}; \g) \big) }{p_\phi\big( h_\phi(\bm{\upzeta}; \g) \,|\, \g \big)}  \right\}  \right]
\end{align}

\begin{proof}
    From the definition of KL divergence, we have
\begin{align}
    D_{\rm KL}(q_\g \| \psi_\theta) &= \E_{\g \sim q_\g} \left[  \log \frac{q_\g(\g)}{\psi_\theta(\g)}  \right]\\
    &= \E_{\g \sim q_\g} \log q_\g(\g) - \E_{\g \sim q_\g} \log \psi_\theta(\g).\\
    \intertext{Now, $\psi_\theta(\g)$ can be written as}
    \psi_\theta(\g) &= \int q_{\g | \f}(\g | \f) p_\theta(\f) d\f \\
    &= \int p_\phi(\f | \g) \frac{q_\n (\g - H\f)~p_\theta(\f)}{p_\phi(\f | \g)} d\f \\
    &= \E_{\f \sim p_\phi(\cdot | \g)} \left[ \frac{q_\n (\g - H\f)~p_\theta(\f)}{p_\phi(\f | \g)}   \right].\\
    \intertext{Therefore,}
    \E_{\g \sim q_\g} \log \psi_\theta(\g) &= \E_{\g \sim q_\g} \left[  \log  \E_{\f \sim p_\phi(\cdot | \g)}\left\{  \frac{ p_\theta(\f) ~ q_\n ( \g - H\f) }{p_\phi( \f ~|~ \g)}  \right\}  \right]\\
    &= \E_{\g \sim q_\g} \left[  \log  \E_{\bm{\upzeta} \sim q_{\bm{\upzeta}}}\left\{  \frac{ p_\theta \big(h_\phi(\bm{\upzeta}; \g)\big) ~ q_\n \big( \g - Hh_\phi(\bm{\upzeta}; \g) \big) }{p_\phi\big( h_\phi(\bm{\upzeta}; \g) ~|~ \g \big)}  \right\}  \right]\\
    &= \mathcal{L}(\theta, \phi).\\
    \intertext{Therefore,}
    D_{\rm KL}(q_\g \| \psi_\theta) &= \E_{\g \sim q_\g} \log q_\g(\g) - \mathcal{L}(\theta, \phi).
\end{align}
Since $\E_{\g \sim q_\g} \log q_\g(\g)$ is a constant, minimizing $D_{\rm KL}(q_\g \| \psi_\theta)$ is equivalent to maximizing $\mathcal{L}(\theta, \phi)$.
\end{proof}

\paragraph{Remark.}
As described in \autoref{sec:approach}, a variational lower bound of $\mathcal{L}$ is optimized in this work:
\begin{align}\label{eqn:amflow_apx}
\mathcal{L}_M(\theta, \phi) = \E_{\g,\bm{\upzeta}_i}  \logavgexp_{0<i\leq M} \left[ \log p_\theta \big(h_\phi(\bm{\upzeta}_i; \g)\big) + \log q_\n \big( \g - Hh_\phi(\bm{\upzeta}_i; \g) \big) - \log p_\phi\big( h_\phi(\bm{\upzeta}_i; \g) ~|~ \g \big)  \right],
\end{align}
where $\bm{\upzeta}_i \sim q_{\bm{\upzeta}}, ~ 0 <i\leq M$, and
$
\logavgexp_{0<i\leq M}(x_i) \coloneqq \log \left[\frac{1}{M}\sum_{i=1}^M \exp(x_i) \right].
$

For sufficiently expressive parametrizations for $p_\theta$ and $h_\phi$, the maximum possible value of $\mathcal{L}_M$ is $\E_{\g\sim q_\g}\log q_\g(\g)$, which corresponds to the scenario where the learned posteriors are consistent, i.e. $p_\phi(\f \,|\, \g) = p_\theta(\f \,|\, \g)$, and the learned distribution of measurements matches the true measurement distribution, i.e. $\psi_\theta = q_\g$. This can be shown as follows. 

For $M = 1$, $\mathcal{L}_M(\theta, \phi)$ reduces to the evidence lower bound (ELBO):
\begin{align*}
    \mathcal{L}_{\rm ELBO}(\theta, \phi) &= \E_{\g \sim q_\g,\bm{\upzeta}\sim q_{\bm{\upzeta}}} \left[ \log p_\theta \big(h_\phi(\bm{\upzeta}; \g)\big) + \log q_\n \big( \g - Hh_\phi(\bm{\upzeta}; \g) \big)\right.\nonumber\\
&\hexdec\hexdec\hexdec\hexdec\hexdec\left.- \log p_\phi\big( h_\phi(\bm{\upzeta}; \g) ~|~ \g \big)  \right].\numberthis\\
&= \E_{\g \sim q_\g} \log \psi_\theta(\g) - D_{\rm KL}(p_\phi(\cdot | \g) \| p_\theta(\cdot | \g)).\numberthis
\end{align*}

Similar to importance-weighted autoencoders (IWAE), \cite{iwae}, it can be shown that 
\begin{align}\label{eqn:iwae_bounds}
    \mathcal{L}_{\rm ELBO}(\theta, \phi) \leq \mathcal{L}_M(\theta, \phi) \leq \E_{\g\sim q_\g} \log \psi_\theta(\g) \leq \E_{\g\sim q_\g} \log q_\g(\g),
\end{align}

with equality occurring when $\psi_\theta = q_\g$ and $p_\phi(\cdot | \g) = p_\theta(\cdot | \g)$. Thus the maximum value that can be achieved by $\mathcal{L}_M(\theta, \phi)$ is $\E_{\g\sim q_\g} \log q_\g(\g)$.

\paragraph{\autoref{lem:full_rank}.}
\textit{If $H$ is a square matrix ($n=m$) with full-rank, if the noise $\n$ is independent of the object, and if the characteristic function of the noise $\chi_\n(\bm{\upnu}) = \E_{\n \sim q_\n}\exp(\iota \bm{\upnu}^\top\n)$ has full support over $\mathbb{R}^m$ ($\iota$ is the square-root of $-1$), then $\psi_\theta = q_\g \Rightarrow p_\theta = q_\f.$
}

\begin{proof}
This proof as been adapted from the AmbientGAN work \cite{ambientgan}. Let $\y = H\f$ represent the noiseless measurements. Therefore, 
\begin{align}
    \g &= \y + \n,\\
\Rightarrow q_\g &= q_\y * q_\n,\\
\intertext{where $*$ represents a convolution (in the sense of linear systems theory) \cite{lathi}. Therefore,}
\chi_\g(\bm{\upnu}) &= \chi_\y(\bm{\upnu}) \chi_\n(\bm{\upnu}), \quad \bm{\upnu} \in \R^m.
\end{align}
Since $\chi_\n$ has full support over $\R^m$, $\chi_\g$ uniquely determines $\chi_\y$. Therefore, $q_\g$ uniquely determines $q_\y$.

Also, since $H$ is bijective, $q_\y$ uniquely determines $q_\f$. Therefore, $\psi_\theta = q_\g \Rightarrow p_\theta = q_\f$. 
\end{proof}

\subsection{Proof of \autoref{thm:amflow_main}}

In order to prove \autoref{thm:amflow_main}, we first establish essential notation and intermediate results needed. Specifically, in \autoref{lem:wasserstein_proj}, we derive an expression for the wasserstein distance between a distribution of a random variable, and the distribution of its projection onto a set. We then proceed to prove \autoref{thm:amflow_main}.

\paragraph{Notation.}
For a closed set $\mathcal{S} \subset \R^n,$ let ${\rm proj}_{\S}(\f)$ denote the orthogonal projection of $\f$ onto $\S$, defined as
\begin{align}
    {\rm proj}_{\S}(\f) = \min_{\f' \in \S} \| \f' - \f \|_2
\end{align}
For a PDF $q: \R^n \rightarrow \R$, let $q^{\S}$ denote the distribution of $\proj_{S}(\x)$, for $\x \sim q$. Also, for distributions $q_1, q_2$, let 
\begin{align}
W_1(q_1 \| q_2) \coloneqq \inf_{\gamma\in \Gamma(q_1, q_2)}\E_{(\x_1, \x_2) \sim \gamma}\| \x_1 - \x_2 \|_2    
\end{align}
denote the Wasserstein 1-distance, with $\Gamma(q_1, q_2)$ being the set of all joint distributions $\gamma: \R^{n\times n} \rightarrow \R$ with marginals $q_1, q_2$, i.e.
\begin{align}
 \int \gamma(\x_1, \x_2)d\x_2 = q_1(\x_1),~ \int \gamma(\x_1, \x_2)d\x_1 = q_2(\x_2).
\end{align}

\begin{lemma}\label{lem:wasserstein_proj}
Let $\x \in \R^n$ be a random vector with distribution $q$. Then, with the above notation,
\begin{align}
    W_1(q \| q^\S) = \E_{\x\sim q} \norm{ \x - {\rm proj}_\S (\x)  }_2.
\end{align}
\end{lemma}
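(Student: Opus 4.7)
The plan is to prove equality via matching upper and lower bounds, both of which follow directly from the defining property of the orthogonal projection.

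For the upper bound, I would exhibit an explicit coupling that attains the claimed value. Consider the joint distribution $\gamma^*$ induced by the map $\x \mapsto (\x, \proj_\S(\x))$ under $\x \sim q$. By construction, the first marginal of $\gamma^*$ is $q$, and the second marginal is precisely the law of $\proj_\S(\x)$ for $\x \sim q$, which is $q^\S$ by definition. Hence $\gamma^* \in \Gamma(q, q^\S)$, and taking the expectation of $\|\x_1 - \x_2\|_2$ under $\gamma^*$ yields
\[
W_1(q \| q^\S) \;\leq\; \E_{(\x_1,\x_2)\sim \gamma^*} \|\x_1 - \x_2\|_2 \;=\; \E_{\x\sim q}\norm{\x - \proj_\S(\x)}_2.
\]

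For the matching lower bound, I would exploit the fact that $q^\S$ is supported on $\S$. Let $\gamma \in \Gamma(q, q^\S)$ be any coupling. Since the second marginal is $q^\S$, we have $\x_2 \in \S$ almost surely under $\gamma$. By the defining property of the orthogonal projection, for every $\x_1 \in \R^n$ and every $\x_2 \in \S$,
\[
\norm{\x_1 - \x_2}_2 \;\geq\; \norm{\x_1 - \proj_\S(\x_1)}_2.
\]
Taking expectation under $\gamma$ and using that the first marginal is $q$ gives
\[
\E_{(\x_1,\x_2)\sim \gamma}\norm{\x_1 - \x_2}_2 \;\geq\; \E_{\x\sim q}\norm{\x - \proj_\S(\x)}_2,
\]
and infimizing over $\gamma \in \Gamma(q, q^\S)$ produces the lower bound. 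Combining the two bounds yields equality.

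The only subtle point, and the main thing to justify carefully, is that $\gamma^*$ is a bona fide probability measure (i.e.\ measurability of $\x \mapsto \proj_\S(\x)$) and that $q^\S$ is indeed supported on $\S$. Since $\S$ is closed, the projection is well-defined Lebesgue-almost everywhere and is a Borel-measurable map, so pushing $q$ forward under $\x \mapsto (\x, \proj_\S(\x))$ produces a valid coupling in $\Gamma(q, q^\S)$. Everything else is a one-line consequence of the projection's defining minimality, so no heavy machinery from optimal transport is needed beyond the definition of $W_1$.
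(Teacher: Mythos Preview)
Your proposal is correct and follows essentially the same approach as the paper: both construct the deterministic coupling $(\x,\proj_\S(\x))$ to obtain the upper bound, and both invoke the defining minimality of the projection (together with the fact that the second marginal is supported on $\S$) to obtain the matching lower bound. The only difference is that you add a brief remark on measurability of the projection map, which the paper does not discuss.
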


\begin{proof}

Let $\gamma_0 : \R^{n\times n} \rightarrow \R$ be a degenerate joint distribution given by
\begin{align}
    \gamma_0(\x,\w) &= q(\x)\delta\big(\w - \proj_\S(\x)\big),\quad \x, \w \in \R^n,\\
    \intertext{where, $\delta(\w)$ denotes the Dirac delta. Therefore, by definition of the Wasserstein distance, }
    W_1(q \| q^\S) &\leq \E_{(\x,\w) \sim \gamma_0} \norm{ \x - \w }_2, \\
    &= \int q(\x)\delta\big(\w - \proj_\S(\x)\big) \norm{ \x - \w }_2 d\x d\w, \\ 
    &= \int q(\x) \norm{ \x - \proj_\S(\x) }_2 d\x, \\
    &= \E_{\x \sim q} \norm{ \x - \proj_\S(\x) }_2.\label{eqn:w_leq_dist}\\ 
    \intertext{On the other hand, by definition of orthogonal projection,}
    \norm{ \x - \proj_\S(\x) }_2 &\leq \norm{ \x - \w }_2, \quad \forall ~ \w \in {\rm supp}(q^\S).
    \intertext{Therefore,}
    \E_{\x\sim q} \norm{ \x - \proj_\S(\x) }_2 &\leq \E_{(\x,\w) \sim \gamma} \norm{ \x - \w }_2, \quad \gamma \in \Gamma(q, q^\S).\\
    \Rightarrow \E_{\x\sim q} \norm{ \x - \proj_\S(\x) }_2 &\leq \inf_{\gamma \in \Gamma(q, q^\S)} \E_{(\x,\w) \sim \gamma} \norm{ \x - \w }_2,\\
    &= W_1(q \| q^\S).\label{eqn:dist_leq_w}\\
    \intertext{Equations \eqref{eqn:w_leq_dist} and \eqref{eqn:dist_leq_w} imply}
    W_1(q \| q^\S) &= \E_{\x\sim q} \norm{ x - \proj_\S(\x) }_2.
\end{align}

\end{proof}

With all the tools in place, we now proceed to prove \autoref{thm:amflow_main}.
Consider the optimization problem stated in \autoref{eqn:spamflow}:
\begin{align}\label{eqn:spamflow2}
\hat{\theta}, \hat{\phi} = \argmin_{\theta, \phi} -\mathcal{L}_M(\theta, \phi) \quad \text{subject to } ~ \E_{\g\sim q_\g}\E_{\f\sim p_\phi(\cdot | \g)} \| \Phi\f - \Phi\text{proj}_{\mathcal{S}_k}(\f) \|_1 < \epsilon. \tag{\ref{eqn:spamflow}}
\end{align}

\paragraph{\autoref{thm:amflow_main}.}
\textit{
If the following hold:
\begin{enumerate}[topsep=-1ex, itemsep=-1ex, leftmargin=0.5cm]
\item $W_1(q_\f ~\|~ q_\f^{\S_k}) \leq \epsilon'$ (the true object distribution is concentrated on $k$-sparse objects under $\Phi$), 
\item $H$ satisfies the RIP for objects $k$-sparse w.r.t. $\Phi$, with isometry constant $\delta_k$,
\item the characteristic function of noise $\chi_\n(\bm{\upnu})$ has full support over $\mathbb{C}^m$, and
\item $(\theta, \phi)$ satisfying $p_{\theta} = q_\f$ and $p_{\phi}(\cdot \,|\, \g) = p_{\theta}(\cdot \,|\, \g)$ is a feasible solution to \autoref{eqn:spamflow} ($G_\theta$ and $h_\phi$ have sufficient capacity),
\end{enumerate}
then the distrubution $p_{\hat{\theta}}$ recovered via \autoref{eqn:spamflow} is close to the true object distribution, in terms of the Wasserstein distance i.e.
\begin{align}
W_1 (p_{\hat{\theta}} ~\|~ q_\f) \leq \left( 1 + \frac{1}{\sqrt{1-\delta_k}}\| H \|_2 \right)(\epsilon + \epsilon').
\end{align}
}

The intuitive idea behind the proof of this theorem is as follows. Compressed sensing stipulates that under precribed conditions, the forward operator is injective on a set of sparse vectors. Thus, if an object distribution is sparse, then the distribution of its measurements should be uniquely linked to it. If the object distribution $q_\f$ is compressible, and if it is ensured that a compressible distribution $p_{\hat{\theta}}$ is recovered via \autoref{eqn:spamflow}, then both $q_\f$ and $p_{\hat{\theta}}$ will be concentrated on the sparse vectors, and will associated with the same measurement distribution $q_\g$. Since the sparse vectors are uniquely determined by the measurements, $p_{\hat{\theta}}$ and $q_\f$ must themselves be close.

\begin{proof}

Since $(\theta, \phi)$ satisfying $p_{\theta} = q_\f$, and $p_{\phi}(\cdot | \g) = p_{\theta}(\cdot | \g)$ is a feasible solution to \autoref{eqn:spamflow2}, the maximum value of $\mathcal{L}_M$ under \autoref{eqn:spamflow2} is $\E_{\g \sim q_\g} \log q_\g (\g)$. Therefore, according to \autoref{eqn:iwae_bounds}, for the estimated $\hat{\theta}$ and $\hat{\phi}$, $\mathcal{L}(\hat{\theta}, \hat{\phi}) = \E_{\g \sim q_\g} \log q_\g (\g)$,
\begin{align}\label{eqn:post_consistency}
    \psi_{\hat{\theta}} = q_\g \text{ and } p_{\hat{\phi}}(\cdot | \g) = p_{\hat{\theta}}(\cdot | \g).
\end{align}

Let $\f_1, \f_2 \in \R^n$. Therefore, by triangle inequality,
\begin{align}
\norm{ \f_1 - \f_2 }_2  &=  \norm{  \f_1 - \f_1^\S + \f_2^\S - \f_2 + \f_1^\S - \f_2^\S  }_2,\\
&\leq \norm{ \f_1 - \f_1^\S }_2 + \norm{\f_2^\S - \f_2}_2 + \norm{\f_1^\S - \f_2^\S}_2,\label{eqn:tri1}
\intertext{where $\f^\S$ is a shorthand for $\proj_{\S_k}(\f)$ for $\f \in \R^n$.}
\intertext{$\f_1, \f_2$ can be represented in  terms of the spasifying transform $\Phi$. Let $\cee_i = \Phi\f_i$ and $\cee_i^\S = \Phi\f_i^\S$, for $i = 1,2$. Therefore,}
\norm{\f_1 - \f_2}_2 &\leq \norm{\f_1- \f_1^\S}_2 + \norm{\Phi^+}_2 \norm{\cee_2 - \cee_2^\S}_2 + \norm{\f_1^\S - \f_2^\S}_2,\label{eqn:tri1_sp}
\intertext{where $\Phi^+$ is the Moore-Penrose pseudoinverse of $\Phi$. Also, by definition, recall that $\cee_1^\S$ and $\cee_2^\S$ have at most $k$ non-zero values.}
\intertext{Now, let $\y_i = H\f_i$ for $i = 1,2$. Therefore,}
\norm{\y_1^\S - \y_2^\S}_2 &= \norm{  \y_1^\S - \y_1 + \y_2 - \y_2^\S + \y_1 - \y_2  }_2\\
&\leq \norm{ \y_1 - \y_1^\S }_2 + \norm{\y_2 - \y_2^\S}_2 + \norm{\y_1 - \y_2}_2,\\
&\leq \norm{H}_2\norm{\f_1 - \f_1^\S}_2 + \norm{H\Phi^+}_2 \norm{\cee_2 - \cee_2^\S}_2 + \norm{\y_1 - \y_2}_2.\label{eqn:tri2}\\
\intertext{Now, the restricted isometry property (RIP) on $H$ defined in \autoref{def:rip} implies}
\norm{\f_1^\S - \f_2^\S}_2 &\leq \frac{1}{\sqrt{1-\delta_k}} \norm{ \y_1^\S - \y_2^\S }.\label{eqn:rip_based}
\intertext{Therefore, Equations \eqref{eqn:tri1_sp}, \eqref{eqn:tri2} and \eqref{eqn:rip_based} give}
\norm{\f_1 - \f_2} &\leq \norm{\f_1 - \f_1^\S}_2 + \norm{\Phi^+}_2 \norm{\cee_2 - \cee_2^\S}_2\nonumber\\
&\hexdec + \frac{1}{\sqrt{1-\delta_k}}\Big[ \norm{H}_2\norm{\f_1 - \f_1^\S}_2\nonumber \\
&\hexdec + \norm{H\Phi^+}_2 \norm{\cee_2 - \cee_2^\S}_2 + \norm{\y_1 - \y_2}_2 \Big].\\
&\leq \alpha\Big( \norm{\f_1 - \f_1^\S}_2 + \norm{\Phi^+}_2\norm{\cee_2 - \cee_2^\S}_2  \Big)\nonumber\\
&\hexdec + \frac{1}{\sqrt{1-\delta_k}}\norm{\y_1 - \y_2}_2,\\
\text{where } \alpha &= 1 + \frac{1}{\sqrt{1-\delta_k}} \norm{H}_2.\label{eqn:alpha}\\
\intertext{Now, let $B = \Gamma(q_\f, p_{\hat{\theta}})$, i.e. the set of all joint distributions $\beta : \R^{n\times n} \rightarrow \R$ that have marginals $q_\f$ and $p_{\hat{\theta}}$. Also, let $\rho_{\hat{\theta}}$ be the distribution of $\y = H\f$ for $\f \sim p_{\hat{\theta}}$, i.e. the noiseless version of $\psi_{\hat{\theta}}$. Therefore, for $\beta \in B$,}
\E_{(\f_1, \f_2)\sim \beta} \norm{\f_1 - \f_2}_2 &\leq \alpha \Big[ \E_{\f_1 \sim q_\f} \norm{\f_1 - \f_1^\S}_2 + \|\Phi^+\|_2 \E_{\f_2 \sim p_{\hat{\theta}}}\norm{\Phi\f_2 - \Phi\f_2^\S}_2 \Big]\nonumber\\
&\hexdec + \frac{1}{\sqrt{1-\delta_k}}\E_{(\f_1,\f_2)\sim \beta}\norm{H\f_1 - H\f_2}_2.\label{eqn:semifinal}\\
\intertext{From \autoref{lem:wasserstein_proj}, we have}
\E_{\f_1 \sim q_\f} \norm{\f_1 - \f_1^\S}_2 &= W_1(q_\f \| q_\f^{\S_k}) \leq \epsilon'.\\
\intertext{Also, from \autoref{eqn:amflow_loss},}
\E_{\f_2 \sim p_{\hat{\theta}}}\norm{\Phi\f_2 - \Phi\f_2^\S}_2 &= \E_{\g\sim q_\g} \E_{\f_2 \sim p_{\hat{\theta}}(\cdot | \g)}\norm{\Phi\f_2 - \Phi\f_2^\S}_2,\\
&= \E_{\g\sim q_\g} \E_{\f \sim p_{\hat{\phi}}(\cdot | \g)}\norm{\Phi\f - \Phi\proj_{\S_k}(\f)}_2,\\
&\leq \E_{\g\sim q_\g} \E_{\f \sim p_{\hat{\phi}}(\cdot | \g)}\norm{\Phi\f - \Phi\proj_{\S_k}(\f)}_1,\\
&\leq \epsilon.\\
\intertext{Taking the infimum of \autoref{eqn:semifinal} over $\beta \in B$, we get}
\inf_{\beta \in B} \E_{(\f_1, \f_2)\sim \beta} \norm{\f_1 - \f_2}_2 &\leq \alpha (\epsilon' + \|\Phi^+\|_2\epsilon) + \frac{1}{\sqrt{1-\delta_k}}\inf_{\beta \in B} \E_{(\f_1,\f_2)\sim \beta}\norm{H\f_1 - H\f_2}_2.\label{eqn:semifinal2}
\intertext{Note that the left-hand side of the above equation is precisely $W_1(p_{\hat{\theta}} \| q_\f)$. Also, note that the rightmost term in \autoref{eqn:semifinal2} is $W_1(q_\y \| \rho_{\hat{\theta}})$:}
W_1(q_\y \| \rho_{\hat{\theta}}) &= \inf_{\beta \in B} \E_{(\f_1,\f_2)\sim \beta}\norm{H\f_1 - H\f_2}_2.\\
\intertext{From \autoref{eqn:post_consistency}, since $q_\g = \psi_{\hat{\theta}}$, \autoref{lem:full_rank} implies $q_\y = \rho_{\hat{\theta}}$}
&\Rightarrow W_1(q_\y \| \rho_{\hat{\theta}}) = 0.
\intertext{Combining with \autoref{eqn:semifinal2}, and setting $\norm{\Phi^+} \leq 1$ and $\alpha$ according to \autoref{eqn:alpha}, we get}
W_1(p_{\hat{\theta}} \| q_\f) &\leq \left( 1 + \frac{1}{\sqrt{1-\delta_k}} \norm{H}_2 \right)(\epsilon + \epsilon')
\end{align}

\end{proof}

\color{black}

\begin{figure}
     \centering
     \begin{subfigure}[b]{0.4\textwidth}
         \centering
         \includegraphics[width=\textwidth]{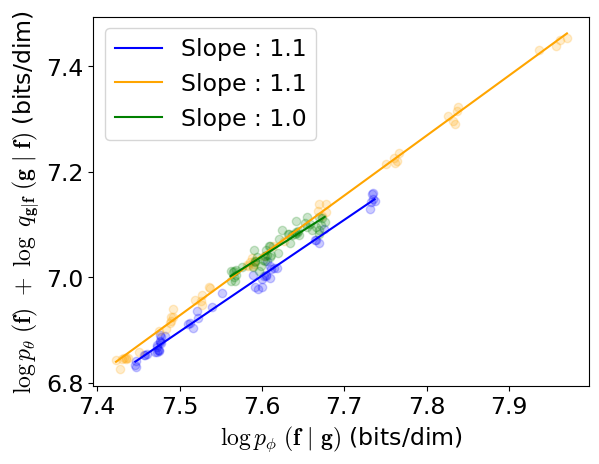}
         \caption{Fully sampled noisy measurements}
         \label{fig:log_odds_0x}
     \end{subfigure}
     \begin{subfigure}[b]{0.4\textwidth}
         \centering
         \includegraphics[width=\textwidth]{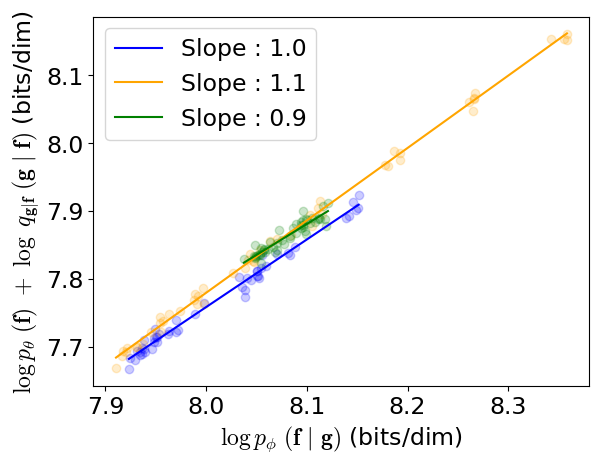}
         \caption{4x undersampled noisy measurements}
         \label{fig:log_odds_4x}
     \end{subfigure}
     \caption{\color{black}Scatter plot of  $\log p_\phi(\f \,|\, \g)$ versus $\log q_{\n} (\g - H\f) + \log p_\theta(\f)$ for three different measurement vectors $\g$, for AmbientFlow trained using two different measurement configurations from the stylized MRI study. The scatter plots for each individual $\g$ are plotted in different colors.}
     \label{fig:log_odds}
\end{figure}

\section{Additional numerical experiments}\label{app:additional_exp}

\subsection{Evaluation of the posterior network}
In order to evaluate the posterior network $h_\phi$, the following experiments were designed using the setup of the stylized MRI study. First, the consistency of the modeled posterior $p_\phi(\cdot \,|\, \g)$ and the posterior induced by the main flow model $p_\theta(\cdot \,|\, \g) \propto q_{\n} (\g - H\f)\, p_\theta(\f)$ was examined. For a fixed measurement vector $\g$, 50 images were sampled from $p_\phi(\cdot \,|\, \g)$ using $h_\phi$. For the 50 samples, $\log p_\phi(\f \,|\, \g)$ and $\log q_{\n} (\g - H\f) + \log p_\theta(\f)$ were computed and plotted against each other. This was repeated was different measurement vectors $\g$. A scatter plot of $\log p_\phi(\f \,|\, \g)$ versus $\log q_{\n} (\g - H\f) + \log p_\theta(\f)$ is shown in \autoref{fig:log_odds} for the AmbientFlow trained on two different measurement configurations -- (1) fully sampled noisy measurements, and (2) 4x undersampled noisy measurements. The scatter plots for each individual $\g$ are plotted in different colors. It can be seen that the slope of a linear fit of the scattered points is close to 1 for all three  measurement vectors considered. This indicates that $p_\phi(\cdot \,|\, \g) \propto q_{\n} (\g - H\f)\, p_\theta(\f)$, i.e. the modeled posterior and the posterior induced by the learned prior are consistent.

% Please add the following required packages to your document preamble:
% \usepackage{booktabs}
% \usepackage{multirow}
\begin{table}[]
\color{black}
\centering
\caption{\color{black}The mean $\pm$ standard deviation of the RMSE and SSIM values computed over the following test image datasets -- (1) the MMSE estimates using the flow prior trained on true objects, and (3) the MMSE estimates using the AmbientFlow prior trained on measurements, and (3) the MMSE estimates using samples from $h_\phi(\cdot\,|\,\g)$. The forward and noise models in the image reconstruction task were the same as the ones used to train the AmbientFlow. }
\label{tab:rmse_ssim_posterior}
\begin{tabular}{@{}lllll@{}}
\toprule
\multicolumn{1}{c}{\multirow{2}{*}{Method}} & \multicolumn{2}{c}{$n/m = 1$}                       & \multicolumn{2}{c}{$n/m = 4$}\vspace{3pt}                       \\
\multicolumn{1}{c}{}                        & \multicolumn{1}{c}{RMSE} & \multicolumn{1}{c}{SSIM} & \multicolumn{1}{c}{RMSE} & \multicolumn{1}{c}{SSIM} \\ \midrule
Flow prior trained on true objects          & 0.017 $\pm$ 0.002        & 0.95 $\pm$ 0.01          & 0.022 $\pm$ 0.003        & 0.94 $\pm$ 0.01          \\
AmbientFlow prior                           & 0.016 $\pm$ 0.002        & 0.96 $\pm$ 0.01          & 0.025 $\pm$ 0.004        & 0.92 $\pm$ 0.01          \\
Posterior network $h_\phi(\cdot\, ; \, \g)$ & 0.016 $\pm$ 0.002        & 0.96 $\pm$ 0.01          & 0.026 $\pm$ 0.004        & 0.91 $\pm$ 0.01          \\ \bottomrule
\end{tabular}
\end{table}

\begin{figure}
     \includegraphics[width=\textwidth]{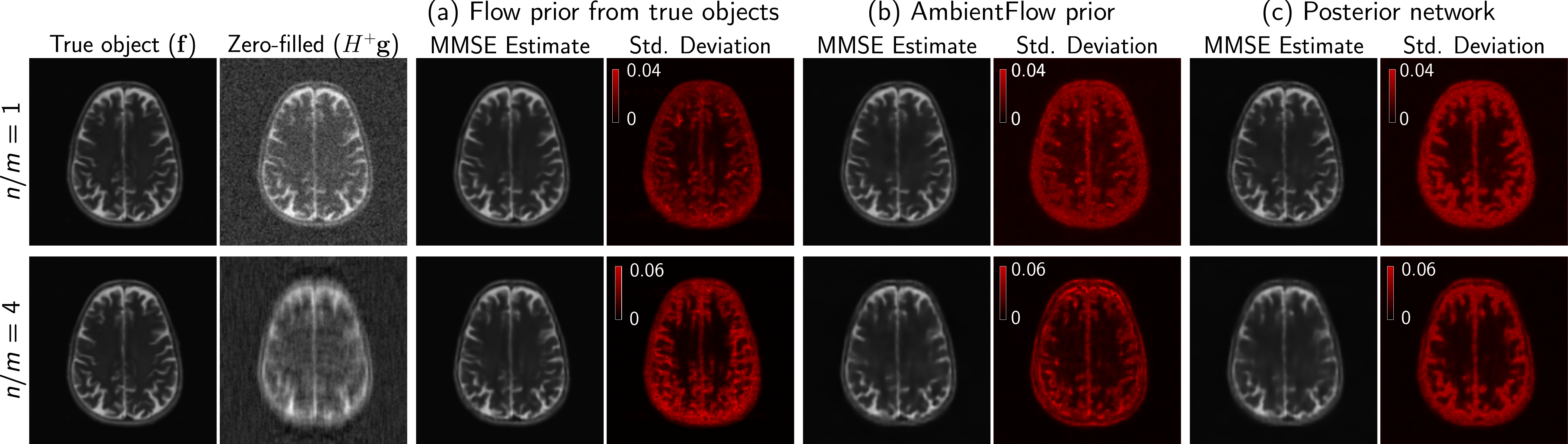}
     \caption{\color{black} MMSE estimates obtained by an empirical average over images obtained from (a) Langevin dynamics-based posterior sampling that employs the flow prior trained on true objects, (b) Langevin dynamics-based posterior sampling that employs the AmbientFlow prior, and (c) the posterior network $h_\phi(\cdot \,;\, \g)$. }
     \label{fig:mri_posterior}
\end{figure}

Next, for the two measurement configurations within the stylized MRI study and for a test dataset of size 20, the outputs of the posterior model $h_\phi(\cdot\,;\,\g)$ for a measurement input $\g$ was compared with the images reconstructed using the corresponding flow prior trained on the ground truths, as well as those reconstructed using the corresponding AmbientFlow prior. The empirical MMSE estimates as well as pixelwise standard deviation maps obtained from these posterior samples are shown in \autoref{fig:mri_posterior}. The RMSE and SSIM of the MMSE estimates are shown in \autoref{tab:rmse_ssim_posterior}. It can be seen that the MMSE estimates computed using samples from $h_\phi(\cdot\,;\,\g)$ closely resemble those computed via Langevin dynamics-based posterior sampling employing the AmbientFlow prior, both visually, as well as in terms of the RMSE and SSIM. 
% Therefore, at least for the considered experiments where the object is in-distribution, and when the forward and the noise models are the same as those used for training the AmbientFlow, the conditional posterior model provides posterior samples that are comparable to the ones obtained via Langevin dynamics-based sampling with the unconditional model as the prior, at least in terms of aggregate metrics such as RMSE and SSIM of the MMSE estimate.

\begin{figure}
\centering
     \includegraphics[width=0.5\textwidth]{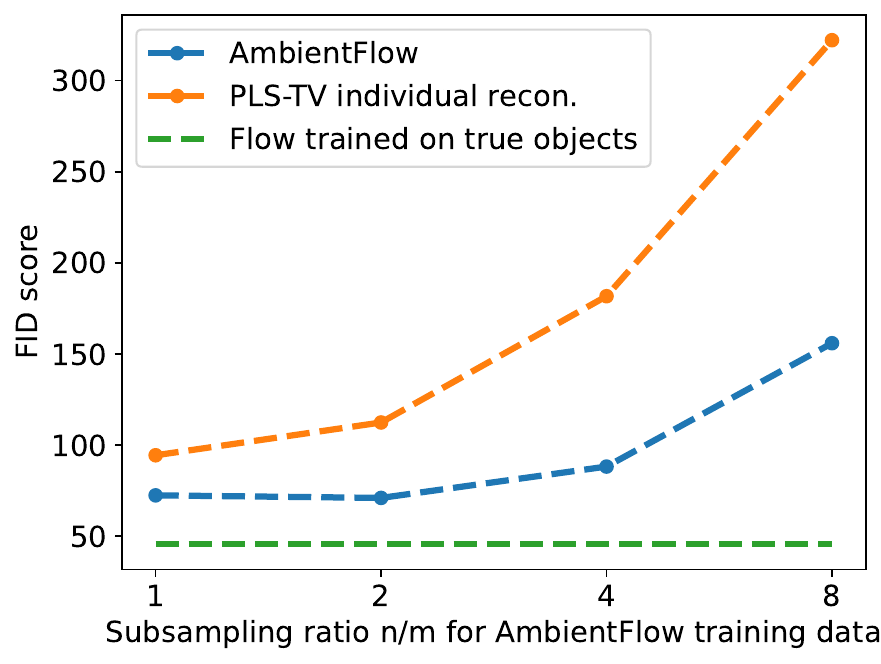}
     \caption{\color{black} FID score as a function of the undersampling ratio used to simulate the training data for AmbientFlow for the stylized MRI study.}
     \label{fig:fid_trend}
\end{figure}

% Please add the following required packages to your document preamble:
% \usepackage{booktabs}
\begin{table}[]
\color{black}
\centering
\caption{\color{black}MSE error in the mean of the images generated by the unconditional model relative to the true mean image; and relative squared-Frobenius error in the rank-1000 approximation $\Sigma_p$ of the learned covariance matrix relative to the rank-1000 approximation $\Sigma_q$ of the true covariance matrix.}
\label{tab:second_order}
\begin{tabular}{@{}cccccc@{}}
\toprule
                                                                                                 & Conventional flow     & $n/m = 1$ & $n/m = 2$ & $n/m = 4$ & $n/m = 8$ \\
                                                                                                      \midrule
$\norm{\E_{p_\theta}\f - \E_{q_\f}\f}_2^2/\norm{\E_{q_\f}\f}_2^2$                           & 0.3\% & 0.8\%     & 0.7\%     & 0.4\%     & 1.4\%\vspace{3pt}     \\
$\norm{\Sigma_p - \Sigma_q}_F^2/\norm{\Sigma_q}_F^2$  & 14\% & 16\%       & 15\%     & 15\%     & 35\% \vspace{3pt}     \\ \bottomrule
\end{tabular}
\end{table}

\begin{figure}
\centering
     \includegraphics[width=0.8\textwidth]{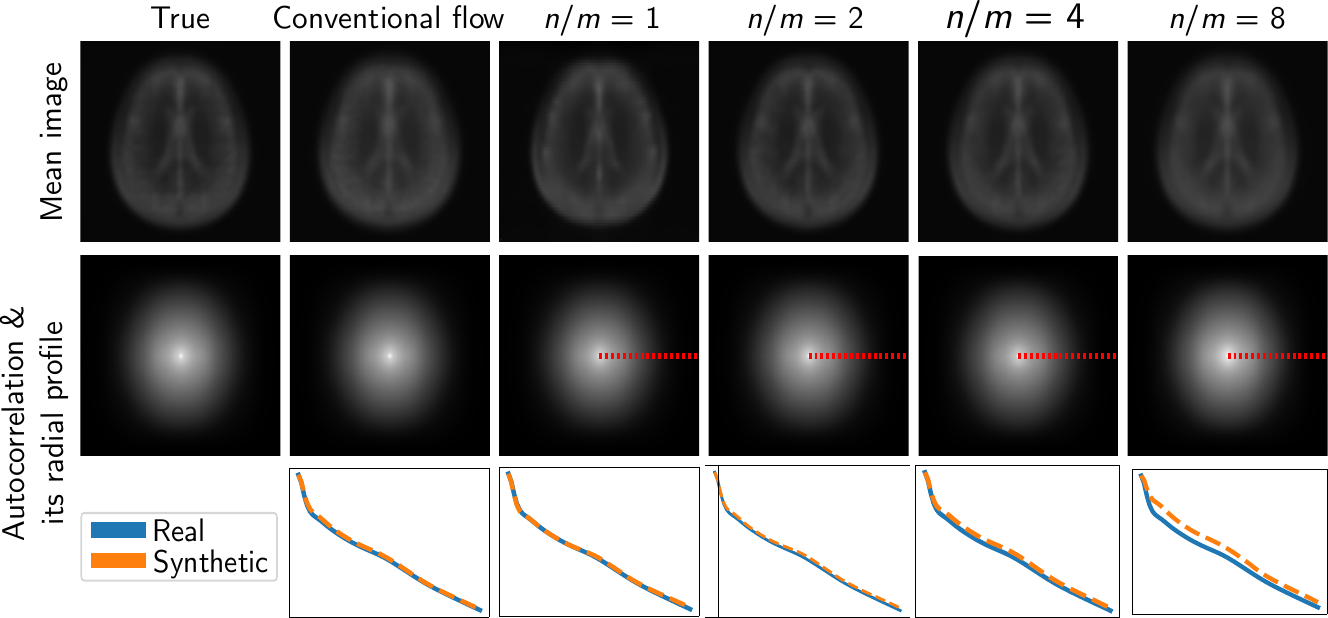}
     \caption{\color{black} The first row shows the empirical mean of the images generated from the true unconditional distribution $q_\f$, and learned unconditional distribution $p_\theta$ for four different undersampling ratios considered in the sylized MRI study. The second row shows the image autocorrelations, and the third row shows the radial profile of the autocorrelation along the dotted red line. }
     \label{fig:second_order}
\end{figure}

\subsection{Additional evaluation of unconditional models}
In this section, we present additional evaluation and ablation studies on the learned unconditional models for the stylized MRI study. Figure \ref{fig:fid_trend} shows the FID score as a function of the undersampling ratio used to simulate the incomplete, noisy stylized MRI measurements used to train AmbientFlow.
Next, we evaluate the ability of our models to learn first- and second-order image statistics. For this experiment, 
a conventional flow model trained on the true objects, as well as 
AmbientFlow models trained on noisy, undersampled stylized MRI measurements with $n/m = 1, 2, 4, 8$ were considered.
Figure \ref{fig:second_order} shows the empirical mean, autocorrelation, and radial profile of the autocorrelation of images generated from the true unconditional distribution $q_\f$, and learned unconditional distribution $p_\theta$ for each of the flow models trained. The MSE error in the empirical mean image relative to the squared $\ell_2$ norm of the empirical mean of the true distribution is shown in \autoref{tab:second_order}. Next, a rank-1000 approximation of the sample covariance matrix was computed for all the models using randomized SVD of the data. Since the first and the 1000th singular values of this matrix differed by almost 6 orders of magnitude, a low-rank approximation to the covariance matrix differs minimally from the full covariance matrix in terms of the Frobenius norm, while also ensuring numerical stability in computation. The error in the empirical mean image relative to the squared $\ell_2$ norm of the empirical mean of the true distribution is shown in \autoref{tab:second_order}. The squared-Frobenius error between the rank-1000 approximation $\Sigma_p$ and the learned covariance matrix relative to the rank-1000 approximation $\Sigma_q$ of the true covariance matrix relative to $\norm{\Sigma_q}_F^2$ is shown in \autoref{tab:second_order}.

\begin{table}
\color{black}
\caption{\color{black}The mean $\pm$ standard deviation of the RMSE and SSIM values for the CelebA face image inpainting study computed over the following test image datasets -- (1) the images reconstructed using the PLS-TV method, (2) the MAP and MMSE estimates using the flow prior trained on true objects, and (3) the MAP and MMSE estimates using the AmbientFlow prior trained on fully sampled noisy measurements. }
\label{tab:rmse_ssim_celeb}
\vspace{5pt}
\centering
\begin{tabular}{@{}llll@{}}
\toprule
\multicolumn{2}{c}{Method}                                          & \multicolumn{1}{c}{RMSE}   & \multicolumn{1}{c}{SSIM}   \\ \midrule
\multicolumn{2}{c}{Navier-Stokes-based inpainting \cite{ns_inpainting}}                                          &  24.3 $\pm$ 6.0&  0.81 $\pm$ 0.03\\ \midrule
\multirow{2}{*}{Flow prior trained on true objects} & MAP Estimate  & 17.8 $\pm$ 4.5&  0.81 $\pm$ 0.05\\
                                                    & MMSE Estimate & \textbf{15.1 $\pm$ 3.7}& \textbf{0.88 $\pm$ 0.03}\\ \midrule
\multirow{2}{*}{AmbientFlow prior}                  & MAP Estimate  & 17.7 $\pm$ 4.4& 0.80 $\pm$ 0.03\\
                                                    & MMSE Estimate & \textbf{15.0 $\pm$ 4.2}& \textbf{0.88 $\pm$ 0.03}\\ \bottomrule

\end{tabular}
\end{table}

\begin{figure}
\centering
     \includegraphics[width=\textwidth]{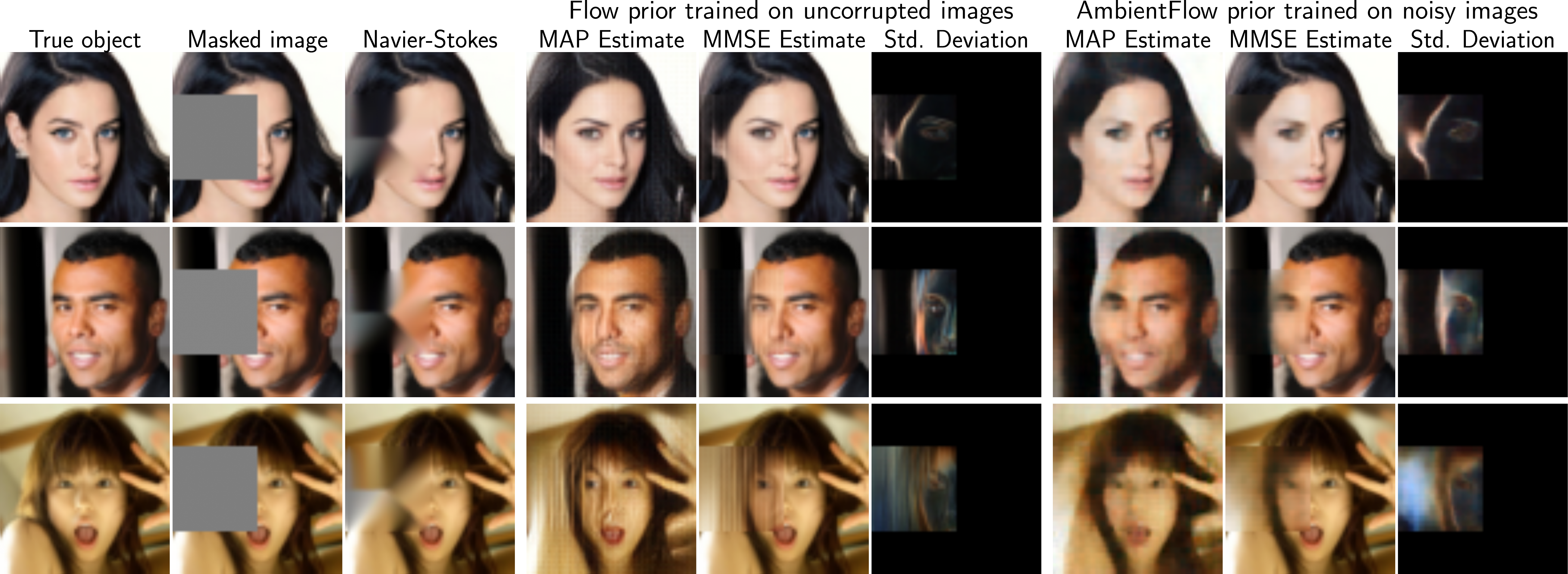}
     \caption{\color{black} Image estimates and pixelwise standard deviation maps from the image inpainting case study.}
     \label{fig:inpainting}
\end{figure}

\subsection{Additional case study: face image inpainting using AmbientFlow prior}
Here, the results of a case study of face image inpainting is presented. In particular, we consider the case where a trained uncomditional AmbientFlow is used as a prior in an image reconstruction task, where the forward operator applicable to the task is different from the forward operator used to train the AmbientFlow. The following two image reconstruction tasks are considered -- (1) approximate maximum a posteriori (MAP) estimation, i.e. approximating the mode of the posterior $p_\theta(\cdot \,|\, \g)$, and (2) approximate sampling from the posterior $p_\theta(\cdot \,|\, \g)$.\\

For both the tasks, an AmbientFlow trained on noisy face images from the CelebA dataset, as well as a conventional flow model trained on the uncorrupted CelebA images were considered. For a held-out dataset of size 20, the first task was performed using the compressed sensing using generative models (CSGM) formalism \cite{asim}.
For the second task, approximate posterior sampling was performed with the flow models as priors using annealed Langevin dynamics (ALD) iterations proposed in Jalal, \textit{et al.} \cite{iocs}. For each true object, the MMSE estimate and the pixelwise standard deviation map $\hat{\sigma}$ were computed empirically using 36 samples obtained via ALD iterations. A baseline Navier-Stokes-based inpainting method was also compared \cite{ns_inpainting}. \autoref{tab:rmse_ssim_celeb} shows the RMSE and SSIM of the reconstructed images with respect to the true object, averaged over a dataset of 20 test images. It can be seen that in terms of RMSE and SSIM, both the MAP and the MMSE image estimates obtained using the AmbientFlow prior are comparable to those obtained using the flow prior trained on uncorrupted images, despite the AmbientFlow being trained only using noisy CelebA images. Figure \ref{fig:inpainting} shows the true and masked images along with the images inpainted by the algorithms described above. It can be seen that although the AmbientFlow prior provides comparable RMSE and SSIM estimates to the flow prior trained on uncorrupted images, it still retains some smoothing artifacts characteristic of the sparsity-promoting penalty imposed on it during training.
\color{black}

{
\small

\bibliography{main}

\begin{thebibliography}{51}
\providecommand{\natexlab}[1]{#1}
\providecommand{\url}[1]{\texttt{#1}}
\expandafter\ifx\csname urlstyle\endcsname\relax
  \providecommand{\doi}[1]{doi: #1}\else
  \providecommand{\doi}{doi: \begingroup \urlstyle{rm}\Url}\fi

\bibitem[Aali \textit{et~al.}(2023)Aali, Arvinte, Kumar, and
  Tamir]{amdiffusion_noisy}
Asad Aali, Marius Arvinte, Sidharth Kumar, and Jonathan~I Tamir.
\newblock Solving inverse problems with score-based generative priors learned
  from noisy data.
\newblock \emph{arXiv preprint arXiv:2305.01166}, 2023.

\bibitem[Ardizzone \textit{et~al.}(2021)Ardizzone, Kruse, L{\"u}th, Bracher,
  Rother, and K{\"o}the]{cinn}
Lynton Ardizzone, Jakob Kruse, Carsten L{\"u}th, Niels Bracher, Carsten Rother,
  and Ullrich K{\"o}the.
\newblock Conditional invertible neural networks for diverse image-to-image
  translation.
\newblock In \emph{Pattern Recognition: 42nd DAGM German Conference, DAGM GCPR
  2020, T{\"u}bingen, Germany, September 28--October 1, 2020, Proceedings 42},
  pp.\  373--387. Springer, 2021.

\bibitem[Asim \textit{et~al.}(2020)Asim, Daniels, Leong, Ahmed, and Hand]{asim}
Muhammad Asim, Max Daniels, Oscar Leong, Ali Ahmed, and Paul Hand.
\newblock Invertible generative models for inverse problems: mitigating
  representation error and dataset bias.
\newblock In \emph{International Conference on Machine Learning}, pp.\
  399--409. PMLR, 2020.

\bibitem[Barrett \& Myers(2013)Barrett and Myers]{b&m}
Harrison~H Barrett and Kyle~J Myers.
\newblock \emph{Foundations of image science}.
\newblock John Wiley \& Sons, 2013.

\bibitem[Beck \& Teboulle(2009)Beck and Teboulle]{plstv}
Amir Beck and Marc Teboulle.
\newblock A fast iterative shrinkage-thresholding algorithm for linear inverse
  problems.
\newblock \emph{SIAM journal on imaging sciences}, 2\penalty0 (1):\penalty0
  183--202, 2009.

\bibitem[Bertalmio \textit{et~al.}(2001)Bertalmio, Bertozzi, and
  Sapiro]{ns_inpainting}
Marcelo Bertalmio, Andrea~L Bertozzi, and Guillermo Sapiro.
\newblock Navier-stokes, fluid dynamics, and image and video inpainting.
\newblock In \emph{Proceedings of the 2001 IEEE Computer Society Conference on
  Computer Vision and Pattern Recognition. CVPR 2001}, volume~1, pp.\  I--I.
  IEEE, 2001.

\bibitem[Bhadra \textit{et~al.}(2021)Bhadra, Kelkar, Brooks, and
  Anastasio]{halu}
Sayantan Bhadra, Varun~A Kelkar, Frank~J Brooks, and Mark~A Anastasio.
\newblock On hallucinations in tomographic image reconstruction.
\newblock \emph{IEEE transactions on medical imaging}, 40\penalty0
  (11):\penalty0 3249--3260, 2021.

\bibitem[Blahut(2004)]{wiener}
Richard~E Blahut.
\newblock \emph{Theory of remote image formation}.
\newblock Cambridge University Press, 2004.

\bibitem[Bora \textit{et~al.}(2017)Bora, Jalal, Price, and Dimakis]{csgm}
Ashish Bora, Ajil Jalal, Eric Price, and Alexandros~G Dimakis.
\newblock Compressed sensing using generative models.
\newblock In \emph{International Conference on Machine Learning}, pp.\
  537--546. PMLR, 2017.

\bibitem[Bora \textit{et~al.}(2018)Bora, Price, and Dimakis]{ambientgan}
Ashish Bora, Eric Price, and Alexandros~G Dimakis.
\newblock Ambientgan: Generative models from lossy measurements.
\newblock In \emph{International conference on learning representations}, 2018.

\bibitem[Burda \textit{et~al.}(2015)Burda, Grosse, and Salakhutdinov]{iwae}
Yuri Burda, Roger Grosse, and Ruslan Salakhutdinov.
\newblock Importance weighted autoencoders.
\newblock \emph{arXiv preprint arXiv:1509.00519}, 2015.

\bibitem[Candes \textit{et~al.}(2006)Candes, Romberg, and Tao]{candes}
Emmanuel~J Candes, Justin~K Romberg, and Terence Tao.
\newblock Stable signal recovery from incomplete and inaccurate measurements.
\newblock \emph{Communications on Pure and Applied Mathematics: A Journal
  Issued by the Courant Institute of Mathematical Sciences}, 59\penalty0
  (8):\penalty0 1207--1223, 2006.

\bibitem[Dabov \textit{et~al.}(2007)Dabov, Foi, Katkovnik, and
  Egiazarian]{bm3d}
Kostadin Dabov, Alessandro Foi, Vladimir Katkovnik, and Karen Egiazarian.
\newblock Image denoising by sparse 3-d transform-domain collaborative
  filtering.
\newblock \emph{IEEE Transactions on image processing}, 16\penalty0
  (8):\penalty0 2080--2095, 2007.

\bibitem[Daras \textit{et~al.}(2023)Daras, Shah, Dagan, Gollakota, Dimakis, and
  Klivans]{ambient_diffusion}
Giannis Daras, Kulin Shah, Yuval Dagan, Aravind Gollakota, Alexandros~G
  Dimakis, and Adam Klivans.
\newblock Ambient diffusion: Learning clean distributions from corrupted data.
\newblock \emph{arXiv preprint arXiv:2305.19256}, 2023.

\bibitem[Dimakis \textit{et~al.}(2022)Dimakis, Bora, Van~Veen, Jalal,
  Vishwanath, and Price]{dimakis_chapter}
Alexandros~G Dimakis, Ashish Bora, Dave Van~Veen, Ajil Jalal, Sriram
  Vishwanath, and Eric Price.
\newblock Deep generative models and inverse problems.
\newblock \emph{Mathematical Aspects of Deep Learning}, pp.\  400, 2022.

\bibitem[Dinh \textit{et~al.}(2014)Dinh, Krueger, and Bengio]{nice}
Laurent Dinh, David Krueger, and Yoshua Bengio.
\newblock Nice: Non-linear independent components estimation.
\newblock \emph{arXiv preprint arXiv:1410.8516}, 2014.

\bibitem[Dinh \textit{et~al.}(2016)Dinh, Sohl-Dickstein, and Bengio]{realnvp}
Laurent Dinh, Jascha Sohl-Dickstein, and Samy Bengio.
\newblock Density estimation using real nvp.
\newblock \emph{arXiv preprint arXiv:1605.08803}, 2016.

\bibitem[Goodfellow \textit{et~al.}(2020)Goodfellow, Pouget-Abadie, Mirza, Xu,
  Warde-Farley, Ozair, Courville, and Bengio]{gan}
Ian Goodfellow, Jean Pouget-Abadie, Mehdi Mirza, Bing Xu, David Warde-Farley,
  Sherjil Ozair, Aaron Courville, and Yoshua Bengio.
\newblock Generative adversarial networks.
\newblock \emph{Communications of the ACM}, 63\penalty0 (11):\penalty0
  139--144, 2020.

\bibitem[Graff \& Sidky(2015)Graff and Sidky]{cs_medical}
Christian~G Graff and Emil~Y Sidky.
\newblock Compressive sensing in medical imaging.
\newblock \emph{Applied optics}, 54\penalty0 (8):\penalty0 C23--C44, 2015.

\bibitem[Higgins \textit{et~al.}(2017)Higgins, Matthey, Pal, Burgess, Glorot,
  Botvinick, Mohamed, and Lerchner]{betavae}
Irina Higgins, Loic Matthey, Arka Pal, Christopher Burgess, Xavier Glorot,
  Matthew Botvinick, Shakir Mohamed, and Alexander Lerchner.
\newblock beta-vae: Learning basic visual concepts with a constrained
  variational framework.
\newblock In \emph{International conference on learning representations}, 2017.

\bibitem[Ho \textit{et~al.}(2020)Ho, Jain, and Abbeel]{ddpm}
Jonathan Ho, Ajay Jain, and Pieter Abbeel.
\newblock Denoising diffusion probabilistic models.
\newblock \emph{Advances in Neural Information Processing Systems},
  33:\penalty0 6840--6851, 2020.

\bibitem[Jalal \textit{et~al.}(2021{\natexlab{a}})Jalal, Karmalkar, Dimakis,
  and Price]{iocs}
Ajil Jalal, Sushrut Karmalkar, Alexandros~G Dimakis, and Eric Price.
\newblock Instance-optimal compressed sensing via posterior sampling.
\newblock \emph{arXiv preprint arXiv:2106.11438}, 2021{\natexlab{a}}.

\bibitem[Jalal \textit{et~al.}(2021{\natexlab{b}})Jalal, Karmalkar, Hoffmann,
  Dimakis, and Price]{jalal_fairness}
Ajil Jalal, Sushrut Karmalkar, Jessica Hoffmann, Alex Dimakis, and Eric Price.
\newblock Fairness for image generation with uncertain sensitive attributes.
\newblock In \emph{International Conference on Machine Learning}, pp.\
  4721--4732. PMLR, 2021{\natexlab{b}}.

\bibitem[Karras \textit{et~al.}(2017)Karras, Aila, Laine, and
  Lehtinen]{celeba-hq}
Tero Karras, Timo Aila, Samuli Laine, and Jaakko Lehtinen.
\newblock Progressive growing of gans for improved quality, stability, and
  variation.
\newblock \emph{arXiv preprint arXiv:1710.10196}, 2017.

\bibitem[Karras \textit{et~al.}(2020)Karras, Laine, Aittala, Hellsten,
  Lehtinen, and Aila]{stylegan2}
Tero Karras, Samuli Laine, Miika Aittala, Janne Hellsten, Jaakko Lehtinen, and
  Timo Aila.
\newblock Analyzing and improving the image quality of stylegan.
\newblock In \emph{Proceedings of the IEEE/CVF conference on computer vision
  and pattern recognition}, pp.\  8110--8119, 2020.

\bibitem[Kelkar \& Anastasio(2021)Kelkar and Anastasio]{picgm}
Varun~A Kelkar and Mark Anastasio.
\newblock Prior image-constrained reconstruction using style-based generative
  models.
\newblock In \emph{International Conference on Machine Learning}, pp.\
  5367--5377. PMLR, 2021.

\bibitem[Kelkar \textit{et~al.}(2021)Kelkar, Bhadra, and Anastasio]{clinn}
Varun~A Kelkar, Sayantan Bhadra, and Mark~A Anastasio.
\newblock Compressible latent-space invertible networks for generative
  model-constrained image reconstruction.
\newblock \emph{IEEE transactions on computational imaging}, 7:\penalty0
  209--223, 2021.

\bibitem[Kelkar \textit{et~al.}(2023{\natexlab{a}})Kelkar, Gotsis, Brooks,
  Prabhat, Myers, Zeng, and Anastasio]{gan_eval1}
Varun~A Kelkar, Dimitrios~S Gotsis, Frank~J Brooks, KC~Prabhat, Kyle~J Myers,
  Rongping Zeng, and Mark~A Anastasio.
\newblock Assessing the ability of generative adversarial networks to learn
  canonical medical image statistics.
\newblock \emph{IEEE transactions on medical imaging}, 2023{\natexlab{a}}.

\bibitem[Kelkar \textit{et~al.}(2023{\natexlab{b}})Kelkar, Gotsis, Deshpande,
  Brooks, Prabhat, Myers, Zeng, and Anastasio]{gan_eval2}
Varun~A Kelkar, Dimitrios~S Gotsis, Rucha Deshpande, Frank~J Brooks,
  KC~Prabhat, Kyle~J Myers, Rongping Zeng, and Mark~A Anastasio.
\newblock Evaluating generative stochastic image models using task-based image
  quality measures.
\newblock In \emph{Medical Imaging 2023: Image Perception, Observer
  Performance, and Technology Assessment}, volume 12467, pp.\  304--310. SPIE,
  2023{\natexlab{b}}.

\bibitem[Khorashadizadeh \textit{et~al.}(2023)Khorashadizadeh, Kothari, Salsi,
  Harandi, de~Hoop, and Dokmani{\'c}]{ivan_inn}
AmirEhsan Khorashadizadeh, Konik Kothari, Leonardo Salsi, Ali~Aghababaei
  Harandi, Maarten de~Hoop, and Ivan Dokmani{\'c}.
\newblock Conditional injective flows for bayesian imaging.
\newblock \emph{IEEE Transactions on Computational Imaging}, 9:\penalty0
  224--237, 2023.

\bibitem[Kingma \textit{et~al.}(2019)Kingma, Welling, \textit{et~al.}]{vae}
Diederik~P Kingma, Max Welling, \textit{et~al.}
\newblock An introduction to variational autoencoders.
\newblock \emph{Foundations and Trends{\textregistered} in Machine Learning},
  12\penalty0 (4):\penalty0 307--392, 2019.

\bibitem[Kingma \& Dhariwal(2018)Kingma and Dhariwal]{glow}
Durk~P Kingma and Prafulla Dhariwal.
\newblock Glow: Generative flow with invertible 1x1 convolutions.
\newblock \emph{Advances in neural information processing systems}, 31, 2018.

\bibitem[Lathi \& Green(2005)Lathi and Green]{lathi}
Bhagwandas~Pannalal Lathi and Roger~A Green.
\newblock \emph{Linear systems and signals}, volume~2.
\newblock Oxford University Press New York, 2005.

\bibitem[LeCun \textit{et~al.}(1998)LeCun, Bottou, Bengio, and Haffner]{mnist}
Yann LeCun, L{\'e}on Bottou, Yoshua Bengio, and Patrick Haffner.
\newblock Gradient-based learning applied to document recognition.
\newblock \emph{Proceedings of the IEEE}, 86\penalty0 (11):\penalty0
  2278--2324, 1998.

\bibitem[Lustig \textit{et~al.}(2008)Lustig, Donoho, Santos, and Pauly]{csmri}
Michael Lustig, David~L Donoho, Juan~M Santos, and John~M Pauly.
\newblock Compressed sensing mri.
\newblock \emph{IEEE signal processing magazine}, 25\penalty0 (2):\penalty0
  72--82, 2008.

\bibitem[Menon \textit{et~al.}(2020)Menon, Damian, Hu, Ravi, and Rudin]{pulse}
Sachit Menon, Alexandru Damian, Shijia Hu, Nikhil Ravi, and Cynthia Rudin.
\newblock Pulse: Self-supervised photo upsampling via latent space exploration
  of generative models.
\newblock In \emph{Proceedings of the ieee/cvf conference on computer vision
  and pattern recognition}, pp.\  2437--2445, 2020.

\bibitem[Parmar \textit{et~al.}(2022)Parmar, Zhang, and Zhu]{clean_fid}
Gaurav Parmar, Richard Zhang, and Jun-Yan Zhu.
\newblock On aliased resizing and surprising subtleties in gan evaluation.
\newblock In \emph{Proceedings of the IEEE/CVF Conference on Computer Vision
  and Pattern Recognition}, pp.\  11410--11420, 2022.

\bibitem[Sankaranarayanan \textit{et~al.}(2023)Sankaranarayanan, Hartvigsen,
  Bilodeau, Tanno, Zhang, Tramer, and Isola]{gan_eval_open}
Swami Sankaranarayanan, Thomas Hartvigsen, Camille Bilodeau, Ryutaro Tanno,
  Cheng Zhang, Florian Tramer, and Phillip Isola.
\newblock {Challenges in deployable generative AI, ICML 2023 workshop}.
\newblock \url{https://deployinggenerativeai.github.io/index}, 2023.

\bibitem[Seonghyeon()]{rosinality}
Kim Seonghyeon.
\newblock {PyTorch Implementation of Glow: Generative Flow with Invertible 1x1
  Convolutions}.
\newblock URL \url{https://github.com/rosinality/glow-pytorch}.

\bibitem[Song \textit{et~al.}(2021)Song, Shen, Xing, and
  Ermon]{score_based_inverse}
Yang Song, Liyue Shen, Lei Xing, and Stefano Ermon.
\newblock Solving inverse problems in medical imaging with score-based
  generative models.
\newblock \emph{arXiv preprint arXiv:2111.08005}, 2021.

\bibitem[Sun \& Bouman(2021)Sun and Bouman]{dpi}
He~Sun and Katherine~L Bouman.
\newblock Deep probabilistic imaging: Uncertainty quantification and
  multi-modal solution characterization for computational imaging.
\newblock In \emph{Proceedings of the AAAI Conference on Artificial
  Intelligence}, volume~35, pp.\  2628--2637, 2021.

\bibitem[Thanh-Tung \& Tran(2020)Thanh-Tung and Tran]{mode_collapse}
Hoang Thanh-Tung and Truyen Tran.
\newblock Catastrophic forgetting and mode collapse in gans.
\newblock In \emph{2020 international joint conference on neural networks
  (ijcnn)}, pp.\  1--10. IEEE, 2020.

\bibitem[Torres \& Brefeld()Torres and Brefeld]{image_interp}
Ricardo da~S Torres and Ulf Brefeld.
\newblock Principled interpolation in normalizing flows.

\bibitem[Van~Griethuysen \textit{et~al.}(2017)Van~Griethuysen, Fedorov, Parmar,
  Hosny, Aucoin, Narayan, Beets-Tan, Fillion-Robin, Pieper, and
  Aerts]{radiomics}
Joost~JM Van~Griethuysen, Andriy Fedorov, Chintan Parmar, Ahmed Hosny, Nicole
  Aucoin, Vivek Narayan, Regina~GH Beets-Tan, Jean-Christophe Fillion-Robin,
  Steve Pieper, and Hugo~JWL Aerts.
\newblock Computational radiomics system to decode the radiographic phenotype.
\newblock \emph{Cancer research}, 77\penalty0 (21):\penalty0 e104--e107, 2017.

\bibitem[Wang \textit{et~al.}(2004)Wang, Bovik, Sheikh, and Simoncelli]{ssim}
Zhou Wang, Alan~C Bovik, Hamid~R Sheikh, and Eero~P Simoncelli.
\newblock Image quality assessment: from error visibility to structural
  similarity.
\newblock \emph{IEEE transactions on image processing}, 13\penalty0
  (4):\penalty0 600--612, 2004.

\bibitem[Zbontar \textit{et~al.}(2018)Zbontar, Knoll, Sriram, Murrell, Huang,
  Muckley, Defazio, Stern, Johnson, Bruno, Parente, Geras, Katsnelson,
  Chandarana, Zhang, Drozdzal, Romero, Rabbat, Vincent, Yakubova, Pinkerton,
  Wang, Owens, Zitnick, Recht, Sodickson, and Lui]{fastmri}
Jure Zbontar, Florian Knoll, Anuroop Sriram, Tullie Murrell, Zhengnan Huang,
  Matthew~J. Muckley, Aaron Defazio, Ruben Stern, Patricia Johnson, Mary Bruno,
  Marc Parente, Krzysztof~J. Geras, Joe Katsnelson, Hersh Chandarana, Zizhao
  Zhang, Michal Drozdzal, Adriana Romero, Michael Rabbat, Pascal Vincent,
  Nafissa Yakubova, James Pinkerton, Duo Wang, Erich Owens, C.~Lawrence
  Zitnick, Michael~P. Recht, Daniel~K. Sodickson, and Yvonne~W. Lui.
\newblock {fastMRI}: An open dataset and benchmarks for accelerated {MRI},
  2018.

\bibitem[Zhao \textit{et~al.}(2022)Zhao, Curtis, and Zhang]{inn_pact}
Xuebin Zhao, Andrew Curtis, and Xin Zhang.
\newblock Bayesian seismic tomography using normalizing flows.
\newblock \emph{Geophysical Journal International}, 228\penalty0 (1):\penalty0
  213--239, 2022.

\bibitem[Zhao \textit{et~al.}()Zhao, Ding, and Zhang]{anomaly}
Yuzhong Zhao, Qiaoqiao Ding, and Xiaoqun Zhang.
\newblock Ae-flow: Autoencoders with normalizing flows for medical images
  anomaly detection.
\newblock In \emph{The Eleventh International Conference on Learning
  Representations}.

\bibitem[Zhong \textit{et~al.}(2021)Zhong, Bepler, Berger, and Davis]{cryoem}
Ellen~D Zhong, Tristan Bepler, Bonnie Berger, and Joseph~H Davis.
\newblock Cryodrgn: reconstruction of heterogeneous cryo-em structures using
  neural networks.
\newblock \emph{Nature methods}, 18\penalty0 (2):\penalty0 176--185, 2021.

\bibitem[Zhou \textit{et~al.}(2022)Zhou, Bhadra, Brooks, Li, and
  Anastasio]{ambientgan_weimin}
Weimin Zhou, Sayantan Bhadra, Frank~J Brooks, Hua Li, and Mark~A Anastasio.
\newblock Learning stochastic object models from medical imaging measurements
  by use of advanced ambient generative adversarial networks.
\newblock \emph{Journal of Medical Imaging}, 9\penalty0 (1):\penalty0
  015503--015503, 2022.

\bibitem[Zhou \textit{et~al.}(2023)Zhou, Villa, and Anastasio]{mcmc_gan}
Weimin Zhou, Umberto Villa, and Mark~A Anastasio.
\newblock Ideal observer computation by use of markov-chain monte carlo with
  generative adversarial networks.
\newblock \emph{arXiv preprint arXiv:2304.00433}, 2023.

\end{thebibliography}
\bibliographystyle{tmlr}

}

%%%%%%%%%%%%%%%%%%%%%%%%%%%%%%%%%%%%%%%%%%%%%%%%%%%%%%%%%%%%

\end{document}